%% file: preprint.tex
\documentclass{applemlr}

\input{apple_preamble}

\usepackage{thm-restate}
\usepackage{algorithm}
\usepackage{algpseudocode}
\creflabelformat{equation}{#2\textup{(#1)}#3}

\definecolor{rowhl}{HTML}{E8EBF0}
\newcommand{\hlrow}{\rowcolor{rowhl}}
\newcolumntype{Y}{>{\centering\arraybackslash}X}

\theoremstyle{definition}

\definecolor{propbg}{RGB}{241,239,248}
\tcbset{
  propboxstyle/.style={
    enhanced, breakable,
    colback=propbg, colframe=propbg,
    boxrule=0pt, arc=8pt,
    left=10pt, right=10pt, top=8pt, bottom=8pt,
    before skip=10pt, after skip=10pt,
  }
}
\tcolorboxenvironment{proposition}{propboxstyle}

\newtcolorbox{propbox}{propboxstyle, before upper={}}

\newtcolorbox{eqbox}{propboxstyle, breakable=false, top=2pt, bottom=2pt,
  before upper={%
    \setlength{\abovedisplayskip}{0pt}\setlength{\belowdisplayskip}{0pt}%
    \setlength{\abovedisplayshortskip}{0pt}\setlength{\belowdisplayshortskip}{0pt}}}

\newcommand{\takeawayname}{Takeaway}
\newtcolorbox{takeaway}[1][\takeawayname]{propboxstyle, breakable=false,
  before upper={\textbf{#1.}\ }}

\newcommand{\ie}{\emph{i.e.},~}
\newcommand{\eg}{\emph{e.g.},~}

\newcommand{\cut}[1]{}

\input{math}

\title{On the Design Space of Continuous Diffusion Language Models}

\author[1,*]{Oscar Davis}
\author[2,*]{Daan Roos}
\author{Alice Bizeul}
\author{Amitis Shidani}
\author{Victor Turrisi}
\author{Louis B\'ethune}
\author{Pierre Ablin}
\author{Marco Cuturi}

\affiliation{Apple}
\affiliation[1]{University of Oxford}
\affiliation[2]{University of Amsterdam}

\abstract{Continuous diffusion language models (CDLMs) promise to bring the benefits of continuous flow
modelling to language, including accelerated sampling and controllable generation.
The field has grown rapidly, and the capabilities of CDLMs have begun to rival those of discrete diffusion.
Yet, unlike for continuous modalities such as images or videos, our grasp of the levers that matter when training CDLMs remains limited.
In this work, we focus on two of these levers, namely self-conditioning and embedding geometry, and shed light on some of the mechanisms behind them, which in turn suggest cheaper and more effective designs.
We begin by showing that self-conditioning corrects a train/inference mismatch by exposing the model to inference-like states that are otherwise rare during training. These gains can be recovered without an extra forward pass by a two-level interpolant, coined $\mathrm{DoubleCat}$, which improves TinyGSM accuracy more than threefold. Self-conditioning also aids in recognising tokens dominated by large-norm Gaussians---an issue known as token identifiability. To address it directly, we introduce discrete reveals to $\mathrm{DoubleCat}$, yielding $\mathrm{SemiCat}$, which achieves discrete diffusion-level performance on TinyGSM, reaching $49.94\%$ with standard training hyperparameters and a tuned sampler. At the $1.7$B scale, $\mathrm{SemiCat}$ outperforms discrete diffusion on four of six likelihood-based evaluations, and surpasses one-hot flows on unconditional generation.
On embedded spaces, we find that cross-entropy drives embeddings towards a geometry whose decoding error rate closely matches the one-hot one.
We briefly explore a range of further options, such as hybrid corruption processes and
noise schedules, including asynchronous ones.
We believe this exploration can help the community better understand and advance the design space of CDLMs.}

\metadata[Correspondence]{\sffamily Oscar Davis: \href{mailto:oscar.davis@cs.ox.ac.uk}{oscar.davis@cs.ox.ac.uk}, Marco Cuturi: \href{mailto:m_cuturi@apple.com}{m\_cuturi@apple.com}}
\metadata[Contributions]{\sffamily $^*$Work done while at Apple}
\date{\sffamily\today}

\begin{document}

\maketitle

\input{section/introduction}
\input{section/background}
\input{section/method}
\input{section/experiment}
\input{section/related}
\input{section/conclusion}

\bibliographystyle{apalike}
\bibliography{ref}

\clearpage
\appendix
\crefalias{section}{appendix}
\crefalias{subsection}{appendix}
\crefalias{subsubsection}{appendix}
\input{section/appendix}

\end{document}

%% file: apple_preamble.tex
\usepackage{amsmath}
\usepackage{enumerate}
\usepackage{algorithm}
\usepackage{algpseudocode}
\usepackage{amsfonts}
\usepackage{amsthm}
\usepackage{cleveref}
\usepackage{diagbox}
\usepackage{colortbl}
\usepackage{amssymb}
\usepackage{xspace}
\usepackage{wrapfig}
\usepackage{adjustbox}
\usepackage{tabularx}
\usepackage{booktabs}
\usepackage{mathtools}
\usepackage{tikz}
\usepackage{enumitem}
\usepackage{silence}
\usepackage{dsfont}
\usepackage[table]{xcolor}
\usepackage[dvipsnames]{xcolor}
\usepackage{multirow}
\usepackage{makecell}
\usepackage{xfakebold}

\definecolor{textgray}{HTML}{6E6E73}
\usetikzlibrary{positioning, calc}
\usetikzlibrary{decorations.pathmorphing}

\makeatletter
\patchcmd{\wrong@fontshape}{\@gobbletwo}{}{}{}
\makeatother
\numberwithin{equation}{section}
\makeatletter
\AtBeginDocument{
  \urlstyle{sf}
  
}
\makeatother

\definecolor{light}{RGB}{125, 125, 125}
\crefname{tcb@cnt@pbox}{code}{code}
\Crefname{tcb@cnt@pbox}{Code}{Code}
\crefname{assumption}{assumption}{assumption}
\Crefname{assumption}{Assumption}{Assumptions}

\newtcolorbox[auto counter]{pbox}[2][]{
  colback=white,
  title=Code~\thetcbcounter: #2,
  #1,fonttitle=\sffamily,
  fontupper=\sffamily,
  arc=2pt,
  colframe=bgcolor,
  coltitle=fgcolor,
  colbacktitle=bgcolor,
  toptitle=0.25cm,
  bottomtitle=0.125cm
}

\makeatletter
\newcommand\applefootnote[1]{%
  \begingroup
  \renewcommand\thefootnote{}%
  \renewcommand\@makefntext[1]{\noindent##1}%
  \footnote{#1}%
  \addtocounter{footnote}{-1}%
  \endgroup
}
\makeatother

\definecolor{cverbbg}{gray}{0.90}

%% file: math.tex
\renewcommand{\mapsto}{\longmapsto}
\renewcommand{\to}{\longrightarrow}

\newcommand{\alphacat}{\ensuremath{\alpha\text{-}\mathrm{Cat}}\xspace}
\newcommand{\doublecat}{\ensuremath{\mathrm{DoubleCat}}\xspace}
\newcommand{\semicat}{\ensuremath{\mathrm{SemiCat}}\xspace}
\newcommand{\sflm}{\ensuremath{\mathbb{S}}-FLM\xspace}
\newcommand{\genppl}{Gen.\ PPL\xspace}

\newcommand{\edr}{\gamma^{\mathrm{DER}}}

\newcommand{\R}{\mathbb{R}}
\newcommand{\N}{\mathbb{N}}
\newcommand{\Nstar}{\mathbb{N}^\star}

\newcommand{\stopgrad}{\mathrm{sg}}

\newcommand{\argmax}{\mathrm{argmax}}
\newcommand{\argmin}{\mathrm{argmin}}
\newcommand{\indic}{\mathbf{1}}
\newcommand{\vx}{\mathbf{x}}
\newcommand{\vy}{\mathbf{y}}
\newcommand{\vv}{\mathbf{v}}
\newcommand{\vzero}{\mathbf{0}}

\newcommand{\E}{\mathbb{E}}
\newcommand{\KL}[2]{\mathrm{KL}\left(#1\,\middle\|\,#2\right)}
\newcommand{\Prob}{\mathbb{P}}

\newcommand{\cov}{\mathrm{cov}}
\newcommand{\MI}{\mathrm{I}}
\newcommand{\distortion}{\mathcal{D}}
\newcommand{\mmse}{\mathrm{MMSE}}

\newcommand{\law}{\mathrm{Law}}
\newcommand{\unif}{\mathcal{U}}
\newcommand{\normal}{\mathcal{N}}
\newcommand{\cat}{\mathrm{Cat}}
\newcommand{\ber}{\mathrm{Ber}}

\newcommand{\matid}{\mathrm{I}}
\newcommand{\pdata}{\ensuremath{p_\mathrm{data}}}
\newcommand{\softmax}{\mathrm{softmax}}

\newcommand{\SNR}{\mathrm{SNR}}

\newcommand{\norm}[1]{\left\lVert#1\right\rVert}
\newcommand{\abs}[1]{\left\lvert#1\right\rvert}
\newcommand{\intrange}[2]{\{#1, \dots, #2\}}

\newcommand{\loss}{\mathcal{L}}
\newcommand{\emb}{\mathsf{emb}}
\newcommand{\embspace}{\mathcal{E}}
\newcommand{\vocabsize}{{\abs{V}}}
\newcommand{\pcatunif}{\frac{\mathbf{1}}{\vocabsize}}

\newcommand{\embed}{\mathbf{e}}
\newcommand{\embhead}{\embed_\theta}

\newcommand{\psc}{\ensuremath{p_\mathrm{sc}}}
\newcommand{\SC}{\hat{x}_1}

\newcommand{\SCvarl}{\hat{X}_1^\ell}

\newcommand{\CITATION}[1][]{%
  \textcolor{red}{\textbf{[CITATION\if\relax\detokenize{#1}\relax\else: #1\fi]}}%
}

\newcommand{\masktok}{\mathsf{m}}

%% file: section/introduction.tex
\section{Introduction}
Continuous diffusion language models (CDLMs) hold several promises for text generation. Ideally, they should port to the world of language modelling all of the key properties of the continuous toolbox, such as fast and controllable generation, consistency models~\citep{consistencymodels,meanflows,howToBuildConsistencyModel}, or stochastic optimal control-based steering~\citep{metaflowmaps,diamondmaps,tiltmatching}.
Yet, some of the seminal advances that have managed to shrink the performance gap between continuous and discrete diffusion methods have relied instead on adaptations that deviate significantly from the continuous diffusion literature. One such technique is \emph{self-conditioning}~\citep{analogbits}, which augments inputs using previous outputs, costing an additional forward pass during training. \citet{fmlmstar} interpret self-conditioning as a fixed-point iteration, and \citet{discreteDataFail} understand it through the lens of multi-modality. Yet, it remains unclear which deficiencies of CDLMs it corrects, and whether these can be addressed more cheaply. Another innovative line of development lies in embedded methods~\citep{hypersphere,vmfFlow,elf,langflow,cobit}, which alleviate the computational and memory cost of one-hot methods~\citep{cfm,fmlm,dfm}. They rely on carefully chosen noise schedules. However, it remains unclear how such schedules interact with the learnt embeddings.

\textbf{In this work}, we show first that self-conditioning addresses a discrepancy between inference- and train-time trajectories (\Cref{ssec:traj-ambiguity}), and increases the density of otherwise under-sampled regions at train-time (consistent with~\citealt{discreteDataFail}). This gain can be recovered by a simple two-level interpolant, called~\doublecat, which requires no additional forward pass of the model, and which more than triples the accuracy on TinyGSM~\citep{tinygsm} from $4.42\%$ to $\mathbf{16.00\%}$. Moreover, we find that self-conditioning helps in \emph{token identifiability}~\citep{candi} (\Cref{ssec:self-cond-embedded}): given the noisy state, the model cannot easily identify the expected endpoint of the trajectory, due to the norm mismatch between the noise and the token's embedding, whereas self-conditioning is supported on the convex hull formed by the embeddings. Replacing that signal with a better representation of the state recovers a substantial part of the performance gain at no additional training cost.
\looseness=-1

We address the token identifiability issue in \doublecat (\Cref{ssec:token-identifiability}). By introducing discrete stochastic reveals akin to CANDI~\citep{candi}, the model is able to condition correctly on parts of its context, helping identifiability. This method, called \semicat (semi-continuous categorical flow), further boosts the performance up to $\mathbf{32.90\%}$, \emph{reaching the discrete diffusion baseline of about $33.74\%$}. Using standard hyperparameters on the task, and a confidence-based sampler, \semicat reaches $\mathbf{49.94\%}$ \emph{without any self-conditioning}. At the $1.7$B scale, \semicat outperforms discrete diffusion models on $4$ out of $6$ likelihood-based benchmarks and matches them on a fifth one; it also surpasses one-hot flows in unconditional generation quality (\Cref{sec:large-experiment}).

We also cover embedded methods. Because of architectural decisions, one-hot flows effectively operate on an embedded space with an anisotropic noise defined by the embeddings' geometry (\Cref{ssec:one-hot-is-embed}), on which we can \emph{directly train}. We demonstrate that embeddings may not be trained with cross-entropy naively, as it maximises their spread. \emph{This itself induces a one-hot-like geometry on the embedded space when considering its corresponding decoding error rate} (\Cref{ssec:embed-is-one-hot}). This may explain why carefully crafted schedules are needed. We argue that training should update the geometry of all embeddings jointly.

Lastly, we briefly explore other avenues such as asynchronous schedules, dynamic time allocation, hybrid methods, and acceleration, thus covering a large part of the design space of CDLMs. We believe that our results on our benchmarks show the promise of continuous and hybrid methods, and unveil important avenues left to explore to understand continuous diffusion for discrete data.

%% file: section/background.tex
\section{Background} \label{sec:background}
Let $V \subset \N$ be the finite vocabulary of tokens, and $\vx = (\vx^1, \ldots, \vx^L) \in V^L$ a sequence of length $L \in \Nstar$. Our goal is to model some data distribution, \pdata, given independent and identically distributed samples therefrom. Model parameters are denoted by $\theta$. A uniform distribution over $I$ is denoted by $\unif(I)$. We defer to~\Cref{app:discrete-diff-background} a brief presentation of discrete diffusion models.

\paragraph{Flow-based generative modelling.} A common option for the modelling of \pdata~is through the dynamic transport of measure~\citep{flowmatching,reflow,stochasticInterpolants}. Given a simple starting distribution, $p_0$, and a target distribution $p_1 \equiv \pdata$, the aim is to create a probability path between the two, $(p_t)_{t\in[0,1]}$, via an ODE: $\dot x_t = v_t(x_t),\text{ with } x_0 \sim p_0$.

Let $I:[0,1]\times\R^d\times \R^d\to\R^d$ be an interpolant of the form $I_t(X_0, X_1) = \alpha_t X_0 + \beta_t X_1$, for some differentiable $\alpha$ and $\beta$, $X_0\sim p_0$, $X_1\sim p_1$, and such that $I_0(X_0, X_1) = X_0$ and $I_1(X_0, X_1) = X_1$. By choosing $v_t(x) = \E[\dot I_t \mid I_t = x]$, then $\law(x_t) = \law(I_t)$ for any $t$, making $x_t$ available \emph{simulation-free} through $I_t$. It thus reduces the problem of generative modelling to the regression:
\begin{equation}
    \loss(\theta) = \E_{X_0, X_1, t\sim \unif(0,1)}\norm{v^\theta_t(I_t) - \dot I_t}_2^2.
\end{equation}
Sampling is done by integrating $v^\theta$. When $\beta_t \equiv 1- \alpha_t$, we write $I_t(X_0, X_1) = (1-\gamma_t) X_0 + \gamma_t X_1$.

\begin{figure}[t]
\centering
\begin{minipage}{0.325\textwidth}
    \centering
    \includegraphics[width=\linewidth]{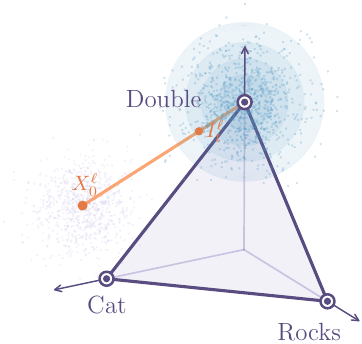}\\
    \small (a) One-hot flow, train-time.
\end{minipage}\hfill
\begin{minipage}{0.325\textwidth}
    \centering
    \includegraphics[width=\linewidth]{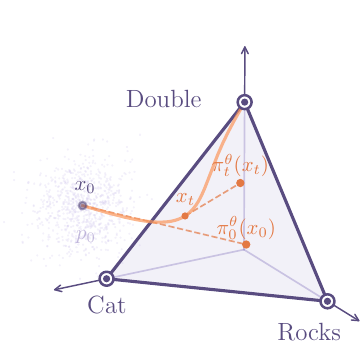}\\
    \small (b) One-hot flow, inference.
\end{minipage}\hfill
\begin{minipage}{0.325\textwidth}
    \centering
    \includegraphics[width=\linewidth]{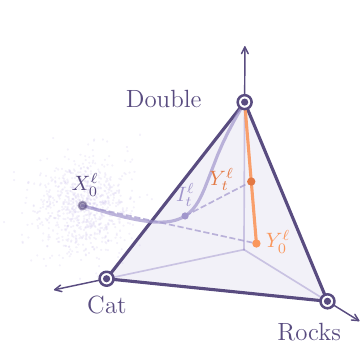}\\
    \small (c) \doublecat, train time.
\end{minipage}
\caption{\textbf{(a)} One-hot flows are always exposed to nearly clean samples for $t\approx 1$, and only track the clean target, $X_1^\ell$. \textbf{(b)} At inference, they track instead the dynamic predictions of their denoiser, $\pi^\theta$, which always lies on the interior of the simplex. \textbf{(c)} \doublecat addresses this issue at train-time by simply adding an additional corruption process on the output space of its denoiser, the simplex.}
\label{fig:doublecat}
\end{figure}

\paragraph{Flow-based language modelling.} Flow modelling has also been adapted to the setting where \pdata~is a discrete distribution in $V^L$~\citep{vfm,cfm,fmlm,dfm}. One choice of modelling is that of \emph{one-hot flows}, where the final distribution is one-hot encoded into $\R^{\vocabsize}$ (\ie $X_1^\ell = e_{\vx^\ell}$). Using the endpoint parametrisation of the generating vector field, $v_t(x) = \dot \gamma_t\frac{\E[X_1\mid I_t = x] - x}{1-\gamma_t}$, it is possible to train using a cross-entropy loss instead:
\begin{equation}
    \label{eq:one-hot-flow-ce}
    \loss(\theta) = -\E_{X_0, X_1, t}\sum_{\ell=1}^L \log \left\langle X_1^\ell, \pi^{\theta,\ell}_t(I_t)\right\rangle,
\end{equation}
where $\pi^\theta:[0,1]\times\R^{L\times\vocabsize}\to\R^{L\times \vocabsize}$ is a denoising network. In this work, we also consider the possibility of evolving the flow on an embedded space, which is first embedded through a trainable codebook~\citep{hypersphere,vmfFlow,elf,langflow}. We denote by $\embspace \subseteq \R^d$ the embedded space, with $d \leq \vocabsize$, and by $\emb:V\to\embspace$ the embedding function. For one-hot values $e_v$ (\eg $X_1^\ell$), we extend the mapping as $\emb(e_v) = \emb(v)$; similarly, the embedding of a sequence is the element-wise embedding. Let $W$ be the matrix such that $W_v = \emb(v)$.

\paragraph{Time schedule.} The choice of time schedule is crucial for one-hot flows~\citep{fmlm,dfm}. Following~\citet{scalingcfm,dfm}, we refer to and use by default for one-hot flows a \emph{mixture schedule}, $\gamma_t = \lambda \edr_t + (1-\lambda) t$, where $\edr$ is the \emph{decoding error rate} (DER) schedule~\citep{candi,duo,fmlm}, with $\lambda=0.75$. $\edr$ makes uniform progress in the DER~\citep[eq. (25)]{fmlm}:
\begin{equation}
\label{eq:base-edr}
P_e(t) \coloneqq \frac{1}{L} \sum_{\ell=1}^{L} \Prob(\argmax~I_t^\ell \neq \argmax \,X_1^{\ell}).
\end{equation}

\paragraph{Hybrid methods.}~\citet{candi} introduce a hybrid flow, CANDI, that continuously evolves each state of the sequence until some randomly sampled unmasking time, $\tau^\ell$, per position. Formally, for each $\ell$, we let i.i.d.\ $\tau^\ell \sim \unif(0,1)$; for $t < \tau^\ell$, $I_t^\ell(X_0, X_1) = (1-\gamma_t) X_0^\ell + \gamma_t X_1^\ell$, and $I_t^\ell(X_0, X_1) = X_1^\ell$ otherwise. The clean token is directly revealed as a one-hot encoding at $\tau^\ell$.\footnote{Note the reversal in time convention to match the flow matching one, and note that we here ignore the difference in interpolants incurred by typical diffusion schedules.}

\citet{fmlmplus} introduce a similar construction via Flow Maps~\citep{howToBuildConsistencyModel,flowMapMatching}: FMLM+ adds to one-hot flows time $t=0$ random reveals of the sequence's tokens at positions $\mathcal{C}$, and learns the flow map from $X_0$ to $X_1$, \ie $X_{0,1}(X_0) = X_1$, at positions $\intrange{1}{L}\setminus \mathcal{C}$. The sampler goes back and forth between times $0$ and $1$, similar to $\gamma$-sampling from~\citet{kim2024consistencytrajectorymodels}, committing a subset of tokens based on the model's confidence at each round.

\paragraph{Plaid.} Introduced in~\citet{plaid} and recently revisited in~\citet{replaid}, Plaid is a Variational Diffusion Model~\citep[VDM;][]{vdm} for language modelling, trained by minimising the negative evidence lower bound (NELBO):
\begin{equation}
\label{eq:plaid-elbo}
\begin{array}{ll}
-\log p_\theta(\vx) \leq &\loss_{\mathrm{NELBO}}(\theta) \coloneqq \KL{q(X_0\mid \vx)}{p(X_0)} - \E_{I_1\mid \vx}\sum_{\ell=1}^L \log \left\langle e_{\vx^\ell}, \pi^{\theta,\ell}_1(I_1)\right\rangle \\
& + \frac{1}{2} \E_{t, I_t\mid \vx}\left[\mathrm{SNR'}(t)\norm{\embhead(I_t, t) - \emb(\vx)}^2\right],
\end{array}
\end{equation}
where $\embhead:\embspace^L \times [0,1] \to \embspace^L$ is given by $\embhead^\ell:(x, t)\mapsto \pi^{\theta,\ell}_t(x)W$ for each $\ell$, $p \equiv p_0$ (Gaussian prior), and $\SNR'$ being the derivative of the \emph{signal-to-noise ratio}, $\SNR:[0,1)\to\R, t\mapsto {\beta_t^2}/{\alpha_t^2}$. The schedules are learnt, following VDM.

\paragraph{Self-conditioning.} Self-conditioning has been shown to be extremely beneficial for CDLMs~\citep{analogbits,elf,langflow,cobit}. It extends the denoiser to accept an additional input, $\tilde\pi^\theta:[0,1]\times \R^d\times \R^d\to\R^d$, and feeds the model its outputs from the previous inference step when generating samples. At train time, it is provided to the model with a probability $\psc$: let $C\sim\ber(\psc)$, then $\SC = C\cdot \stopgrad(\tilde\pi_t^\theta(I_t, \vzero))$ is the self-conditioning signal, where $\stopgrad$ is the stop-gradient operator, training then uses $\tilde \pi_t^\theta(I_t, \SC)$.

Concretely, the cross-entropy loss becomes:
\begin{equation}
    \label{eq:ce-with-sc}
    \loss(\theta) = -\E_{X_0, X_1, t, \SC}\sum_{\ell=1}^L \log \left\langle X_1^\ell, \tilde\pi^{\theta,\ell}_t(I_t, \SC)\right\rangle.
\end{equation}

%% file: section/method.tex
\section{Enhancing one-hot flows}
\label{sec:one-hot}
\begin{wrapfigure}{r}{0.42\textwidth}
    \vspace{-4.5em}
    \centering
    \includegraphics[width=0.42\textwidth]{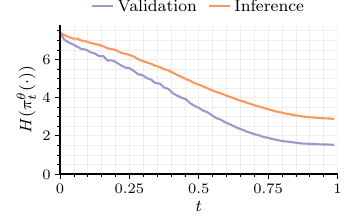}
    \caption{Entropy of a one-hot flow's denoiser, $\pi^\theta$, along validation set and inference trajectories.}
    \label{fig:catflow-decreasing-entropy}
    \vspace{-2em}
\end{wrapfigure}
In this section, we identify two issues that one-hot flows suffer from. First, they lack trajectory ambiguity: they are trained on clean sequences at $t \approx 1$, which mismatches the statistics of their inference trajectories. Second, they are unable to identify the tokens that they are generating, which we attribute to the high norm Gaussian noise. We show two simple interventions, \doublecat and \semicat, respectively, both of which drastically improve the resulting performance. 

\paragraph{Benchmark.} We mostly rely on TinyGSM~\citep{tinygsm}: the model is trained on generating Python solutions to grade school mathematical problems, and is then evaluated on GSM8K~\citep{gsm8k}. To understand failure modes of trained models, we measure the percentage of programs that run, the percentage that have name errors (referencing non-existent variables), and the final accuracy. We use the settings of~\citet{puma}. Finally, retained methods are also evaluated on the standard OpenWebText (OWT)~\citep{openwebtext} unconditional generation task on which we report the generative perplexity (\genppl) as measured by GPT-2~\citep{gpt2} alongside the entropy of generated samples, reported as Pareto fronts. See~\Cref{app:hyperparameters} for more detail. As baselines, we provide masked discrete diffusion (MDM) and uniform-state diffusion (USDM), for discrete diffusion models, and FMLM+~\citep{fmlmplus} and CANDI~\citep{candi} for hybrid methods. We retrain all baselines under the same settings, and report those numbers.

\subsection{Trajectory ambiguity}
\label{ssec:traj-ambiguity}
For a one-hot flow, near $t \approx 1$, the model is only trained on sequences in a very small radius around the final endpoint: $I_t$ is a highly concentrated Gaussian with covariance $(1-\gamma_t)^2 \matid$ and mean $\gamma_t X_1$. Moreover, at inference time, the model follows the learnt denoiser, $\pi^\theta$, which has non-zero entropy for any $t$ (\Cref{fig:catflow-decreasing-entropy}), and therefore never points towards a clean vertex of the simplex. This creates a mismatch between inference and training, where the model, instead of following a prediction for $X_1^\ell$, follows the path defined by the denoiser $(\pi_t^\theta(x_t))_{t\in[0,1]}$ on the interior of the simplex. On the other hand, self-conditioning provides a signal with the same support as the denoiser, which perturbs the Gaussian inputs towards inference-like regions~\citep{discreteDataFail}.

\paragraph{Double continuous corruption.} We introduce \doublecat, a doubly categorical flow, with a corruption process on the simplex, tracked by a normal Gaussian one. Formally, we define
\begin{equation}
    \begin{aligned}
        Y_t^\ell &= (1-\kappa_t) Y_0^\ell + \kappa_t X_1^\ell\\
        I_t^{\mathsf{dc},\ell}(X_0, X_1) &= (1-\gamma_t)X_0^\ell + \gamma_t Y_t^\ell,
    \end{aligned}
\end{equation}
for some $Y_0^\ell$ prior, and $\kappa$ a differentiable schedule, with $\kappa_0 = 0$ and $\kappa_1 = 1$. Equivalently, it can be written as
\begin{equation}
    \label{eq:dc-interpolant}
    I_t^{\mathsf{dc},\ell}(X_0, X_1) = (1-\gamma_t)X_0^\ell + \gamma_t(1-\kappa_t)Y_0^\ell + \gamma_t \kappa_t X_1^\ell.
\end{equation}
Note that this forms a valid probability path between $p_0$ and $p_1$. This formulation comes from the observation that the inference trajectory $(\hat I_t^\ell)_{t\in[0,1]}$ (induced by the denoiser-driven ODE) can be written as
\begin{equation}
    \hat I_t^\ell = (1-\gamma_t) X_0^\ell + \gamma_t \hat Y_t^\ell,
\end{equation}
for $(\hat Y_t^\ell)_{t\in[0,1]}$ a trajectory over the simplex corresponding to a weighted average of the predictions of the chosen denoiser; the derivation is given in~\Cref{app:doublecat-proof}. $\hat Y_t^\ell$ being a convex combination of predictions, by the concavity of the entropy, $H(\hat Y_t^\ell)$ is at least the weighted average entropy of the denoiser along the path, which is positive at all $t$ (\Cref{fig:catflow-decreasing-entropy}). For \doublecat, if we choose $Y_0^\ell$ to have mean $\pcatunif$, then the entropy evolves roughly from $H(\pcatunif) = \log\vocabsize$ to $H(Y_1^\ell) = 0$, thus remaining positive until late, like $\hat Y_t^\ell$. \doublecat therefore trains on inference-like states. Pure one-hot flow training is recovered for $\kappa_t \equiv 1$, which, in contrast, puts no mass on simplex components with non-zero entropy, since $H(X_1^\ell) = 0$.

For inference, we use the same denoiser for both parts of the flow as an approximation. We ablate the choice of prior in~\Cref{tab:doublecat-prior}, and find the uniform distribution over the simplex to work best. We also ablate the schedule $\kappa$ in~\Cref{tab:doublecat-schedule}, and retain $\kappa:t\mapsto \sqrt{t}$.~\Cref{fig:doublecat} illustrates our intuition.

\begin{table}[t]
\centering
\small
\setlength{\tabcolsep}{4pt}
\begin{tabularx}{\textwidth}{@{}Xcccccc@{}}
\toprule
\multirow{2}{*}[-2pt]{\textbf{Method}} & \multicolumn{4}{c}{\textbf{Accuracy} (\%, $\uparrow$)} & \multicolumn{2}{c}{\textbf{Best} (\%)} \\
\cmidrule(lr){2-5}\cmidrule(lr){6-7}
& $T = 1$ & $T = 0.5$ & $T = 0.2$ & $\mathrm{top}\text{-}k\text{=}1$ & Runs ($\uparrow$) & Name err. ($\downarrow$) \\
\midrule
\emph{Discrete diffusion baselines}\\
\quad MDM & $8.79$ & $21.68$ & $27.14$ & $28.28$ & $88.63$ & $11.22$ \\
\quad USDM & $\mathbf{12.43}$ & $\mathbf{29.34}$ & $\mathbf{33.06}$ & $\mathbf{33.74}$ & $\mathbf{91.96}$ & $\mathbf{7.81}$ \\
\midrule
\emph{One-hot based}\\
\quad Base one-hot flow & $3.28$ & $4.42$ & $3.89$ & $3.64$ & $50.91$ & $48.07$ \\
\qquad with self-cond. & ${6.44}$ & ${6.75}$ & ${8.34}$ & ${6.97}$ & ${61.79}$ & ${37.60}$ \\
\hlrow \quad \doublecat & $9.70$ & $13.50$ & $14.18$ & $14.78$ & $63.61$ & $36.16$ \\
\hlrow \qquad Base one-hot sampling & $\mathbf{11.45}$ & $\mathbf{15.85}$ & $\mathbf{16.00}$ & $\mathbf{15.92}$ & $\mathbf{69.67}$ & $\mathbf{30.02}$ \\
\midrule
\emph{Hybrid methods, one-hot}\\
\quad CANDI~\citep{candi} & $7.20$ & $9.55$ & $9.63$ & $8.72$ & $69.22$ & $29.26$ \\
\quad FMLM+~\citep{fmlmplus} & $16.83$ & $18.88$ & $18.27$ & $18.50$ & $90.52$ & $9.02$ \\
\hlrow \quad \semicat & $17.51$ & $29.34$ & $32.22$ & $32.90$ & $89.23$ & ${10.39}$ \\
\hlrow \qquad with standard hyperparameters$^\dagger$ & $21.08$ & $35.41$ & $37.68$ & $38.67$ & $90.75$ & $9.10$ \\
\hlrow \qquad $+$ InfoNoise~\citep{raya2026noiseschedulinginformationguidedallocation} & $\mathbf{21.46}$ & $\mathbf{36.24}$ & $41.09$ & $38.06$ & $86.58$ & $13.04$ \\
\hlrow \qquad $+$ \Cref{alg:semicat-tuned-sampling} {\footnotesize($\rho/N=0.75$, $\lambda=0.25$, $\eta_\mathrm{max}=0.25$)} & -- & -- & $\mathbf{49.73}$ & $\mathbf{49.94}$ & $\mathbf{95.36}$ & $\mathbf{4.38}$ \\
\bottomrule
\end{tabularx}
\caption{TinyGSM results at $256$ NFEs. Accuracy is reported per sampler; the last two columns report the best value attained over those samplers. $T$ denotes the temperature applied to $\pi^\theta$. The models are trained with batch size $256$ for $200$k steps. $^\dagger$ refers to standard hyperparameters for the TinyGSM benchmark in the literature (including batch size of $512$, $250$k steps). See~\Cref{app:hyperparameters} for more details. }
\label{tab:tinygsm-doublecat}
\end{table}

\paragraph{Results.} The results on TinyGSM are available in~\Cref{tab:tinygsm-doublecat}. \doublecat's performance improves by more than $3\times$ over one-hot flows. Discarding the $Y_t$ term at inference---\ie sampling as in a usual one-hot flow---further improves the performance: up to over $4\times$ (with $T=0.2$), with an accuracy reaching $16.00\%$, name error percentages lower by $18$ points, and valid programs almost $19$ points higher. This suggests that \doublecat imitates the dynamics at inference of a one-hot flow, which tracks the denoiser's trajectory, $(\pi_t^\theta(x_t))_{t\in[0,1]}$, and that the simplex noise, $Y_0^\ell$, biases sampling.

We also show that this mechanism, while being substantially cheaper computationally, outperforms plain self-conditioning, in~\Cref{tab:self-conditioning-doublecat}. Nonetheless, self-conditioning further improves \doublecat (up to $20.77\%$ at $T=0.5$); we further explore this in~\Cref{ssec:self-cond-embedded}.

\begin{takeaway}
Self-conditioning increases the density of highly likely inference regions during training, diminishing the train--test mismatch. \doublecat recovers a large part of the benefit by introducing a double-level flow, imitating self-conditioning.
\end{takeaway}

\paragraph{Discrete vs.\ continuous.} A natural question is why this discrepancy affects discrete data generation, whereas it is far less of an issue in the typical continuous case (\eg images). One possible explanation that we provide for this is the degeneracy of the target denoiser, $D_t^{\star}(x) = \E[X_1 \mid I_t = x]$. Its Jacobian is proportional to the posterior covariance,
\begin{equation}
\nabla_{x^\ell} D_t^{\star, \ell}(x) = \frac{\gamma_t}{(1-\gamma_t)^2}\cov(X_1^\ell \mid I_t = x),
\end{equation}
the trace of which is $\frac{\gamma_t}{(1-\gamma_t)^2}\left(1 - \norm{D_t^{\star, \ell}(x)}^2\right)$ for one-hot data. For (continuous) log-concave densities (\eg Gaussian), this Jacobian remains bounded by $1/\gamma_t$.

For discrete data, by contrast, the covariance does not shrink between vertices. Call a state $x$ \emph{uncommitted} at position $\ell$ if its posterior is spread over several tokens, \ie $x \in U_\eta^{t,\ell} \coloneqq \{x : 1 - \norm{D_t^{\star, \ell}(x)}^2 \geq \eta\}$ for a fixed $\eta \in (0, 1)$. On $U_\eta^{t,\ell}$, the trace of the Jacobian is at least $\eta\gamma_t/(1-\gamma_t)^2$, which diverges as $t \to 1$. Near $t \approx 1$, a small perturbation may change the target prediction drastically. Uncommitted states are also rarely seen in training: under a uniform prior, we show that the expected fraction of uncommitted positions is bounded as
\begin{equation}
    \frac{1}{L}\sum_{\ell=1}^L\Prob(I_t \in U_\eta^{t,\ell}) \leq \frac{2P_e(t)}{\eta} \xrightarrow[t \to 1]{} 0.
\end{equation}
At inference, however, $H(\hat Y_t^\ell) > 0$, and trajectories travel through this sparsely trained interior, where the model must extrapolate. A smooth extrapolation of a smooth target is nearly correct, but that of a target varying this sharply is not: any misplaced boundary flips the predicted token. These claims are detailed in~\Cref{app:doublecat-proof}.

Gaps in training coverage seem to be more benign for continuous data, but costly for discrete data, which \doublecat partially addresses by placing training mass in those regions.

\subsection{Token identifiability}
\label{ssec:token-identifiability}
\begin{figure}[tbp]
    \centering
    \includegraphics[width=\linewidth]{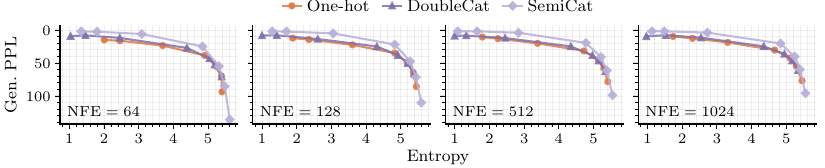}
    \caption{\genppl/entropy Pareto frontiers as measured by GPT-2 for one-hot flows trained on OpenWebText with sequence lengths of $1024$ tokens, using the GPT-2 tokeniser.}
    \label{fig:cat-pareto-grid}
\end{figure}
Despite the improvement offered by \doublecat, the model still heavily suffers from name errors ($30.02\%$). We attribute this to \emph{token identifiability}~\citep{candi}: when referencing another variable, the model struggles to identify current tokens because of the large noise in $x_t$, and therefore cannot condition on it precisely: the Gaussian noise has an expected norm $\sqrt{\vocabsize}$, but the signal has a norm of $1$.

To mitigate this issue, we add a \emph{revealing} mechanism akin to the one in CANDI: at random times $\tau^\ell$, a token is revealed as a one-hot input. This way, the model can cleanly condition on revealed inputs. The resulting method is \semicat.

Formally, the \semicat interpolants are given by:
\begin{equation}
    I_t^{\mathsf{sc},\ell}(X_0, X_1) = I_t^{\mathsf{dc},\ell}(X_0, X_1)\indic(t < \tau^\ell) + \indic(t \geq \tau^\ell)X_1^\ell,
\end{equation}
where $\indic$ is the indicator function. An optimal model carries over tokens for which $t \geq \tau^\ell$, and so we parametrise it as such~\citep{mdlm}. The training therefore reduces to the following cross-entropy loss:
\begin{equation}
    \label{eq:semicat-ce}
    \loss(\theta) = -\E_{X_0, X_1, t, \boldsymbol{\tau}}\sum_{\ell=1}^L \indic(t < \tau^\ell) \log \left\langle X_1^\ell, \pi^{\theta,\ell}_t(I_t^{\mathsf{sc}})\right\rangle,
\end{equation}
where $\boldsymbol{\tau} = (\tau^1, \ldots, \tau^L)$. Following \doublecat, we ignore the $(Y_t)_{t\in[0,1]}$ terms at inference time, thus amounting to CANDI-like sampling: each still-continuous position is revealed between times $t_n$ and $t_{n+1}$ with probability $\frac{t_{n+1}-t_n}{1-t_n}$, committing a token drawn from $\pi^\theta$, while the remaining positions follow the one-hot flow, as summarised in~\Cref{alg:semicat-sampling}.

In~\Cref{tab:tinygsm-doublecat}, \semicat produces only $10.39\%$ runs with name errors and programs that run $89.23\%$ of the time, \emph{and reaches the discrete diffusion level with an accuracy of $\mathbf{32.90\%}$ without self-conditioning}, on TinyGSM. We note that revealing inputs \`a la CANDI is not sufficient on its own.
On OWT, \semicat also dominates the \genppl/Entropy Pareto frontier of one-hot flows at all NFEs (\Cref{fig:cat-pareto-grid}).

For better comparison, we also train \semicat with more standard hyperparameters (including batch size $512$ and $250$k training steps, see~\Cref{app:hyperparameters} and~\Cref{tab:hyperparameters-tinygsm-standard}). We reach $\mathbf{38.67\%}$ with a naive sampler. Adaptive noise allocation~\citep[InfoNoise,][]{raya2026noiseschedulinginformationguidedallocation} brings our performance further up to $\mathbf{41.09\%}$, and a better tuned confidence-based sampler allows \semicat to attain $\mathbf{49.94\%}$.

\begin{takeaway}
Large Gaussian noise makes the tokens hard to identify from the current state, $x_t$. Introducing reveals like CANDI considerably helps in this regard.
\end{takeaway}

While \semicat obtains high performance, its stochastic reveals and piecewise continuous trajectories make it a harder candidate to accelerate using typical \emph{continuous} techniques. We discuss in~\Cref{app:asynchronous-schedules} the possibility of designing asynchronous schedules (\ie schedules per token position, $\ell$) to alleviate this issue. Besides, we briefly explore more general hybrid corruption processes in~\Cref{app:hybrid-methods}, and show that \semicat can obtain competitive performance compared to FMLM+ when distilling our model using D-MMD~\citep{dmmd,mmd} in~\Cref{app:dmmd}, in the few-step regime.

\section{Designing embeddings}
\label{sec:embeddings}
One-hot flows are costly, especially given that modern LM vocabulary sizes can exceed $200$k tokens, and incur a large memory and computational penalty, requiring the materialisation and manipulation of an $L\times\vocabsize$ tensor per data point in a batch. It is therefore natural to depart from this representation and work on smaller embedded spaces. We show that their DER can become problematic with pure cross-entropy training, and that self-conditioning aids in token identifiability.

\subsection{Your one-hot flow is secretly embedded}
\label{ssec:one-hot-is-embed}
The open-sourced one-hot flow works for language modelling~\citep{cfm,fmlm} use the adapted DiT architecture~\citep{dit} from~\citet{sedd}, with a simple linear projection for the input, which we denote by $W \in \R^{\vocabsize \times d}$. This means that the first layer of the architecture after the projection receives an input distributed as
\begin{equation}
    \label{eq:one-hot-is-embed}
    \forall \ell \in \intrange{1}{L}, \qquad (I_tW)_\ell \sim \normal\left(\gamma_t (X^\ell_1)^\top W, (1-\gamma_t)^2 W^\top W\right).
\end{equation}
This effectively shows that one-hot flows operate on an embedded space defined by their projection matrix, but which also modifies the injected noise, giving it a covariance structure defined by $W^\top W$. Note that~\citet{candi} make a similar observation and use it to accelerate inference. This naturally holds for \doublecat and \semicat too. We discuss how to \emph{train} \semicat directly on the embedded space in~\Cref{app:doublecat-embed-reprod}, and attain the same end performance but with a more unstable training due to a Cholesky decomposition.

\subsection{Your embedded flow is secretly one-hot}
\label{ssec:embed-is-one-hot}
\begin{table}[t]
\centering
\small
\setlength{\tabcolsep}{4pt}
\begin{tabularx}{\textwidth}{@{}Xcccccc@{}}
\toprule
\multirow{2}{*}[-2pt]{\textbf{Method}} & \multicolumn{4}{c}{\textbf{Accuracy} (\%, $\uparrow$)} & \multicolumn{2}{c}{\textbf{Best} (\%)} \\
\cmidrule(lr){2-5}\cmidrule(lr){6-7}
& $T = 1$ & $T = 0.5$ & $T = 0.2$ & $\mathrm{top}\text{-}k\text{=}1$ & Runs ($\uparrow$) & Name err. ($\downarrow$) \\
\midrule
\emph{Ambient sphere}\\
\quad $d = 16$ & $6.97$ & $9.40$ & $8.95$ & $8.87$ & $58.83$ & $40.71$ \\
\quad $d = 128$ & $7.88$ & $9.25$ & $9.63$ & $10.24$ & $56.48$ & $42.76$ \\
\quad $d = 768$ & $\mathbf{8.64}$ & $\mathbf{12.89}$ & $\mathbf{13.12}$ & $\mathbf{12.28}$ & $\mathbf{66.72}$ & $\mathbf{32.75}$ \\
\midrule
\emph{Plaid, $d = 16$}\\
\quad self-cond. & $\mathbf{19.71}$ & $\mathbf{28.43}$ & $11.30$ & $2.81$ & $\mathbf{84.38}$ & $9.17$ \\
\quad without self-cond. & $8.19$ & $10.84$ & $11.98$ & $\mathbf{11.14}$ & $72.25$ & $26.91$ \\
\emph{Plaid, $d = 128$}\\
\quad self-cond. & $17.89$ & $22.44$ & $\mathbf{16.22}$ & $10.46$ & $82.94$ & $12.66$ \\
\quad without self-cond. & $6.75$ & $9.55$ & $9.48$ & $9.78$ & $62.77$ & $36.54$ \\
\emph{Plaid, $d = 768$}\\
\quad self-cond. & $17.97$ & $24.56$ & $12.96$ & $5.00$ & $78.32$ & $\mathbf{3.56}$ \\
\quad without self-cond. & $4.62$ & $5.23$ & $8.79$ & $8.04$ & $57.70$ & $41.77$ \\
\midrule
\hlrow \emph{Plaid variants, $d = 16$, without self-cond.} & & & & & & \\
\hlrow \quad with posterior \doublecat & $12.89$ & $15.85$ & $13.87$ & $14.18$ & $74.07$ & $25.32$ \\
\hlrow \quad stochastic codebook & $9.55$ & $16.15$ & $18.35$ & $18.42$ & $68.31$ & $31.08$ \\
\hlrow \quad with posterior as self-cond. & $10.84$ & $16.53$ & $17.97$ & $18.35$ & $67.10$ & $32.07$  \\
\hlrow \qquad self-cond. at inference & $\mathbf{16.60}$ & $\mathbf{18.04}$ & $\mathbf{19.86}$ & $\mathbf{20.24}$ & $\mathbf{76.12}$ & $\mathbf{23.12}$ \\
\bottomrule
\end{tabularx}
\caption{TinyGSM results at $256$ NFE. Plaid learns its log-SNR endpoints; the variants all pin them. ``Without self-cond.'' refers to checkpoints trained without self-conditioning. $T$ denotes the temperature applied to $\pi^\theta$. The models are trained with batch size $256$ for $200$k steps. See~\Cref{app:hyperparameters} for more details.}
\label{tab:tinygsm-embedded}
\end{table}

Considering embedded methods, we know that \doublecat and \semicat on the embedded space from~\Cref{ssec:one-hot-is-embed} obtain high performance, and Plaid~\citep{plaid} does too. In contrast, naive embedded training underperforms: denoted as ``ambient sphere'', we train (with default cross-entropy) a simple flow interpolating between a Gaussian and the final unit-normalised embedding, \ie $\norm{\emb(v)} = 1$ for any $v\in V$. Mathematically, we write
\begin{equation}
    I_t^\ell(X_0, X_1) = (1-t)X_0^\ell + t \emb(X_1^\ell).
\end{equation}
In~\Cref{tab:tinygsm-embedded}, the ambient sphere underperforms Plaid with self-conditioning; besides, we note that the ambient sphere also largely underperforms on OWT, with a very high \genppl (beyond $200$) at matched data entropy.

The axis that helps us understand these results is the training objective. The simple cross entropy objective of the ambient sphere is pathological in two ways: \textit{(i)} The model only updates the embeddings of the tokens involved in the batch, thus not updating the entire geometry of $\embspace$. Plaid updates all embeddings through its MSE diffusion term, and \semicat and \doublecat do so through the covariance structure of their embedded noise (\Cref{ssec:one-hot-is-embed}). \textit{(ii)} Cross-entropy ignores the geometry of the embeddings, and spreads tokens apart on the sphere approximately recovering the one-hot geometry in the DER sense~\citep{candi,duo}, \ie $P_e(t)$ tends towards that of the one-hot case. To see this, observe that the DER for embeddings can be generalised to
\begin{equation}
\label{eq:geom-edr}
    P_e(t) \coloneqq \frac{1}{L} \sum_{\ell=1}^L \Prob(\argmin_v \norm{ I_t^\ell - \emb(v) }^2 \neq \argmax~X_1^\ell).
\end{equation}
For $\emb:v\mapsto e_v$, this recovers the usual case of~\eqref{eq:base-edr}. \Cref{prop:emb-dep-sched} exhibits that the DER~\eqref{eq:geom-edr} depends solely on the interactions $\emb(j) - \emb(i)$ and $\langle\emb(i),\emb(j)\rangle$ for any $i, j \in V$. In~\Cref{app:ce-mse-geometry}, we show informally that, under a Bayes-optimal denoiser, the cross-entropy pushes only the true token's embedding away from the posterior-mean embedding (\ie away from the tokens it is confused with). Plaid's MSE term contains an opposite, attracting component, which on its own would collapse embeddings to zero. Plaid thus balances the two, but pure cross-entropy training only repels, and we recover the one-hot DER regime. Indeed,~\Cref{app:one-hot-limit} shows that unit-norm embeddings maximally spread in dimension $d$ have inner product terms that tend to $0$ as $d$ increases, and in turn the DER tends to the one-hot case.

\begin{figure}[t]
\centering
\begin{subfigure}[t]{0.48\textwidth}
\centering
\includegraphics{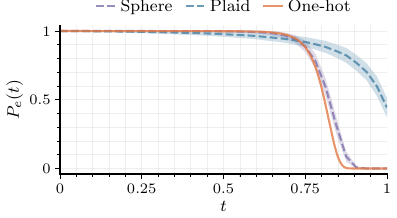}
\caption{Decoding error rates, $P_e$.}
\label{fig:edr-embedded-sphere}
\end{subfigure}\hfill
\begin{subfigure}[t]{0.48\textwidth}
\centering
\includegraphics{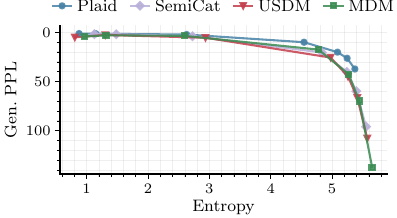}
\caption{\genppl/Entropy Pareto fronts on OWT.}
\label{fig:cross-method-pareto-owt-1024}
\end{subfigure}
\caption{DER of embedded geometries, and the quality/diversity trade-off across methods on OWT. \textbf{(Left)} The DER of a trained sphere model on OWT and that of Plaid both with $d=768$, against the known one-hot DER. \textbf{(Right)} \genppl/Entropy Pareto frontiers on OWT, with Plaid for $d=16$.}
\label{fig:sphere-decode-cross-pareto}
\end{figure}
Empirically, in~\Cref{fig:edr-embedded-sphere}, we measure through Monte Carlo estimates the DER, $P_e$, across time for a trained ambient sphere and a Plaid model ($d=768$) on OWT. The measured and one-hot DER curves are indistinguishable except for a small gap at $t \approx 0.85$. Remark that Plaid's embeddings never induce $P_e(t) = 0$ even at $t=1$ due to the learnt VDM schedule endpoints (see~\Cref{app:vdm-snr}). At $d=768$, the ambient sphere's spreading of tokens makes them more identifiable, yielding lower name errors and higher run rate when compared with Plaid without self-conditioning.

\begin{takeaway}
Cross-entropy training spreads tokens apart, inducing the same DER as a one-hot flow. This may explain why prior work requires more advanced schedules~\citep{hypersphere,elf,langflow}; an alternative is updating the geometry of all embeddings jointly.
\end{takeaway}

\subsection{Self-conditioning on embedded spaces}
\label{ssec:self-cond-embedded}
In~\Cref{tab:tinygsm-embedded}, the performance of the Plaid runs without self-conditioning \emph{monotonically worsens} with $d$. This substantiates our findings: in higher dimensions, the discrepancy between the train- and inference- time statistics, $I_t^\ell$ and $x_t^\ell$, is further widened. For this reason, we turn to adapting \doublecat.

\paragraph{Embedded DoubleCat.} Naively porting \doublecat to Plaid, by simply tracking $Y_0^\ell = \vv^\top W$ until the true $Y_1^\ell = \emb(X_1^\ell)$, for $\vv \sim \unif(\Delta^\vocabsize)$, breaks. This happens because the model is given an input noisier than expected by the statistics of the ELBO, and applying the large weight $\SNR'$ to a non-zero loss makes the model diverge. To solve this, we attempted two techniques that provide more stability. Note that we freeze the endpoints of the learnt VDM schedule for additional stability.

The first one avoids having non-zero terms near $t\approx 1$: we instead flow the model towards its token-wise Bayesian posterior expectation at that point, $I_t^\ell = x_t^\ell$, and time $t$, assuming a uniform prior over the tokens and unit-normalised embeddings. That is to say, we set
\begin{equation}
\label{eq:bayes-posterior}
    D_t^\ell \coloneqq \E\left[\emb(X_1^\ell) \mid I_t^\ell = x_t^\ell\right] = \softmax_{v\in V}\left(\frac{\beta_t\left\langle \emb(v), x_t^\ell\right\rangle}{\alpha_t^2}\right)W,
\end{equation}
and construct the input instead as $\tilde{I}_t^\ell = \alpha_t X_0^\ell + \beta_t D_t^\ell$. Note that we recover similar statistics, since $Y_0^\ell$ is deterministically $\pcatunif^\top W$ and $Y_1^\ell = \emb({X_1^\ell})$. As shown in~\Cref{tab:tinygsm-embedded}, this modestly improves over the base $11.98\%$, reaching $15.85\%$.

The second one is denoted by ``stochastic codebook'' in~\Cref{tab:tinygsm-embedded}, where we defined embeddings to be Gaussians, regularised to $1/d$ variance per coordinate and unit-norm mean. The goal is to increase the density of under-trained regions by this added stochasticity. This is most effective at low temperatures, but does not recover the self-conditioning performance. This resembles the noising tricks of~\citet{elf,ldlm}.

\paragraph{Self-conditioning and token identifiability.} We hypothesise that self-conditioning \emph{also} enhances token identifiability. To show that, instead of removing self-conditioning outright, we attempt to replace it with a cheaper proxy at train time, which is rendered possible with small $d$. While the token identifiability issue is less severe in small dimension, the mismatch in norms between the Gaussian and the embedding of norm \emph{one} is still present. In contrast, the self-conditioning signal,
\begin{equation}
    \SC = \embhead(x_t, \vzero, t) \equiv \E_{X_1\sim \pi^\theta_t(\cdot, \vzero, t)}\left[\emb(X_1)\mid I_t = x_t \right],
\end{equation}
lies exactly on the convex hull formed by the embeddings. Instead of self-conditioning, we feed the model the Bayesian posterior expectation (\ie \eqref{eq:bayes-posterior}), $\SC = D_t$, at train time. At inference, we try both the posterior ($\SC = D_t$, where $D_t = (D_t^1, \ldots, D_t^L)$) and true self-conditioning ($\SC = \pi^\theta_{t_{k-1}}(x_{t_{k-1}})$). Since $D_t$ is deterministic given $I_t = x_t$, it adds no information to the input, and gains must come from a better representation of the state. This approach attains $18.35\%$ (vs.\ $11.98\%$ without self-conditioning), and $20.24\%$ with true self-conditioning at inference---at no additional forward pass cost at train-time.

We support this claim with the following idealised analysis, proven in~\Cref{app:self-cond-identifiability}: we assume a perfect denoiser, self-conditioning computed from $I_t$ as at train-time, and unit-norm embeddings; we use the Plaid path, $I_t^\ell(X_0, X_1) = \alpha_t X_0^\ell + \beta_t \emb(X_1^\ell)$. Let $R$ be a function of $I_t$, \ie $R = f(I_t) \in \R^d$. Define $\distortion_B(R)$ to be the largest possible distortion between $g(\emb(X_1^\ell))$ and $g(R)$ over all $B$-Lipschitz functions $g:\R^d\to\R^d$:
\begin{equation}
    \label{eq:distortion-def}
    \distortion_B(R) \coloneqq \sup_{\mathrm{Lip}(g) \leq B} \E\norm{g(R) - g(\emb(X_1^\ell))}^2.
\end{equation}
This definition quantifies the worst-case error gap for Lipschitz maps evaluated on representations of $I_t$. Since the $\sup$ is attained by $g:x\mapsto Bx$, we have $\distortion_B(R) = B^2\E\norm{R-\emb(X_1^\ell)}^2$.
Define the posterior mean embedding as $D_t^{\star, \ell} = \E[\emb(X_1^\ell) \mid I_t]$. For any $R$, we know that
\begin{equation}
    \distortion_B\left(D_t^{\star, \ell}\right) \leq \distortion_B\left(R\right).
\end{equation}
That is to say, $D_t^{\star, \ell}$ is optimal. Furthermore, we compare $I_t^\ell$ (normalised by $\beta_t$ to account for the norm mismatch) and $D_t^{\star,\ell}$, and prove that
\begin{equation}
    \distortion_B(I_t^\ell/\beta_t) - \distortion_B(D_t^{\star, \ell}) = B^2\left[\frac{d}{\SNR(t)} - \mmse_t^\ell\right],
\end{equation}
where $\mmse_t^\ell$ is the minimal MSE at time $t$ and position $\ell$. (\Cref{app:self-cond-identifiability} treats the more general case of an imperfect denoiser.) We note that the best attainable $\mmse$ over codebooks of dimension $d$ is non-increasing in $d$, while the distortion gap grows at least linearly in $d$. This mechanism also provides another possible explanation for the growing discrepancies of Plaid with $d$ in~\Cref{tab:tinygsm-embedded}, but it does not cover the aforementioned train/inference mismatch nor inference-time self-conditioning.

\begin{takeaway}
Self-conditioning holds a more identifiable representation of the state, on the convex hull formed by the embeddings, which aids in token identifiability.
\end{takeaway}

Cross-method Pareto frontiers are available in~\Cref{fig:cross-pareto-front-owt}, and for $1024$ NFE in~\Cref{fig:cross-method-pareto-owt-1024}.

%% file: section/experiment.tex
\section{Scaling SemiCat}
\begin{figure}[tbp]
    \centering
    \includegraphics[width=\linewidth]{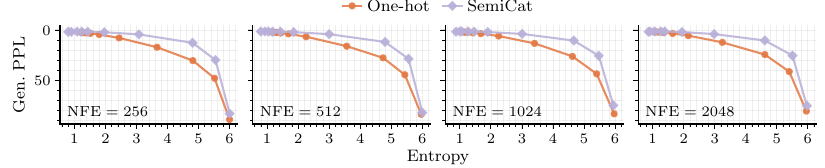}
    \caption{\genppl as measured by Qwen $2.5$ $7$B for models trained on Nemotron with sequence lengths of $2048$ tokens, using the TikToken tokeniser.}
    \label{fig:big-run-pareto-gen}
\end{figure}
\label{sec:large-experiment}
\begin{table}[t]
\centering
\vskip 0.1in
\begin{small}
\begin{tabularx}{\textwidth}{@{}Xcccccc@{}}
\toprule
\textbf{Method} & \textbf{ARC-E} & \textbf{BoolQ} & \textbf{OBQA} & \textbf{PIQA} & \textbf{RACE} & \textbf{SIQA} \\
\midrule
\textit{Chance} & $24.7$ & $50.4$ & $26.6$ & $51.6$ & $24.2$ & $32.2$ \\
\midrule
LLaDA-8B-Base~\citep{llada} & -- & -- & -- & $74.4$ & -- & -- \\
AR$^\dagger$ {\small\textit{(autoregressive)}} & $72.7$ & $71.9$ & $40.4$ & $78.1$ & $36.2$ & $41.9$ \\
AR$^\star$ {\small\textit{(autoregressive)}} & $73.1$ & $63.6$ & $40.2$ & $76.9$ & $36.3$ & $43.5$ \\
\cmidrule{1-7}
Duo$^\dagger$ {\small\textit{(uniform diffusion)}} & $53.4$ & $59.6$ & $\mathbf{33.0}$ & $62.7$ & $35.0$ & $39.0$ \\
MDLM$^\dagger$ {\small\textit{(masked diffusion)}} & $50.5$ & $62.8$ & $32.0$ & $62.2$ & $34.7$ & $39.2$ \\
\midrule
\hlrow One-hot flow & $42.6$ & $37.9$ & $26.8$ & $56.4$ & $28.2$ & $36.4$ \\
\hlrow \semicat~{\small\textit{(ours)}} & $\mathbf{61.8}$ & $\mathbf{64.8}$ & $30.2$ & $\mathbf{70.3}$ & $\mathbf{35.3}$ & $\mathbf{40.4}$ \\
\bottomrule
\end{tabularx}
\end{small}
\caption{Zero-shot accuracy (\%) on likelihood-based benchmarks. $^\star$Models from~\citet{scalingcfm}. $^\dagger$Models from~\citet{scalingbeyondmaskedlm}. Both with $1.7$B parameters, trained on Nemotron for $2.1$T tokens. \semicat and one-hot flow likelihoods are estimated by the cross-entropy loss of the continuation (see~\cref{eq:one-hot-flow-ce,eq:semicat-ce}). Best non-AR results at $1.7$B parameters per column are in \textbf{bold}.}
\label{tab:likelihood-benchmarks}
\vspace{-1.25em}
\end{table}
We train a larger \semicat model following the protocol of~\citet{scalingcfm,scalingbeyondmaskedlm} closely: the architecture is the same DiT~\citep{dit,sedd} with $1.7$B parameters, trained on the Nemotron Pretraining dataset~\citep{nemotron} for $2.1$T tokens using the TikToken tokeniser~\citep{tiktoken}, with sequences of length $2048$. We use $256\times$ H100 GPUs for the pre-training run with a batch size of $10$ per GPU. We also train a one-hot flow baseline on the same settings~\citep{scalingcfm}. See~\Cref{app:hyperparameters} for more details.

\paragraph{Likelihood-based benchmarks.} We evaluate our model over the same multiple-choice question answering (MCQA) benchmarks as in~\citet{scalingcfm,scalingbeyondmaskedlm}: ARC-easy~\citep{arceasy}, BoolQ~\citep{boolq}, OBQA~\citep{obqa}, PIQA~\citep{piqa}, RACE~\citep{RACE} and SIQA~\citep{siqa}. Instead of using an ELBO, we find the plain cross-entropy loss to be a better proxy for the likelihood for both the one-hot flow and \semicat.

The results are available in~\Cref{tab:likelihood-benchmarks}. \semicat outperforms our one-hot flow baseline on every MCQA task, and \emph{outperforms the discrete diffusion baselines on ARC-E by $8.4$ points, on BoolQ by $2.0$, PIQA by $7.6$, and on SIQA by $1.2$}. On OBQA, our results are behind by $2.8$ points, and we slightly exceed Duo and MDLM on RACE ($+0.3$).

\paragraph{Unconditional generation.} Following the setting of~\citet{scalingcfm}, \semicat outperforms one-hot flows on unconditional generation, as measured by the \genppl/Entropy Pareto frontiers provided in~\Cref{fig:big-run-pareto-gen}. We generate sequences of $2048$ tokens, and sweep over NFEs and temperature. The \genppl is computed using Qwen $2.5$ $7$B~\citep{qwen2}. At all NFEs, \semicat dominates the one-hot flow baseline.

%% file: section/related.tex
\section{Related Work}

\paragraph{Continuous diffusion for language modelling.} Gaussian interpolation on one-hot encodings was introduced from a variational perspective in VFM~\citep{vfm}, and has been shown to be amenable to self-distillation and steering by CFM/FMLM/DFM~\citep{cfm,fmlm,dfm}. \citet{fmlmstar} analyse self-conditioning through a fixed-point iteration lens;~\citet{discreteDataFail} do so through that of multi-modality.

\paragraph{Embedded space design.} Since those advances, \sflm and von Mises-Fisher flows~\citep{hypersphere,vmfFlow} were the first to show reasonable performance on TinyGSM for continuous methods, by flowing on the sphere but by taking different probability path constructions. In parallel, geometrically simpler approaches have been developed. LangFlow~\citep{langflow} use a simple linear projection, and ELF~\citep{elf} attempt learnt embeddings but have found using a pre-trained encoder to be much more effective.~\citet{ldlm} show how to leverage a pretrained AR model to obtain strong embeddings.

We provide additional references in~\Cref{app:related-work}.

%% file: section/conclusion.tex
\section{Conclusion}
In this work, we show that self-conditioning aids in two important aspects of CDLMs: trajectory ambiguity (increasing the density of inference-like points at train time) and token identifiability (the ability to identify the expected endpoint of the trajectory given its current state). For the former, \doublecat can recover a large part of those benefits; for the latter, providing a better representation of the state, such as feeding the analytically available Bayesian posterior, helps substantially at a lower cost. We show that one-hot methods in fact evolve over an embedded space with anisotropic noise, while embedded methods trained naively recover one-hot properties in terms of their DER. \semicat, a hybrid method, can attain discrete-diffusion-level performance on our benchmarks. At scale, \semicat outperforms discrete diffusion on four out of six of our likelihood tasks, and surpasses one-hot flows on unconditional generation. We leave to future work accelerating these methods with the continuous toolbox, and understanding how that toolbox interacts with our design choices.

%% file: section/appendix.tex
\clearpage
\section{Extended related work}
\label{app:related-work}
\paragraph{Earlier approaches.} Earlier approaches of continuous diffusion for language modelling include CDCD~\citep{cdcd}, Diffusion-LM~\citep{diffusionlm}, Bayesian Flow Networks~\citep{bfngraves}.

\paragraph{Geometric constructions.} Earlier geometric work includes~\citet{fisher,categoricalflowmatching}, which flowed both on Fisher-Rao simplex and on the positive orthant on the sphere, but that failed to scale to larger vocabulary sizes.

\paragraph{Beyond token-wise embeddings.} Latent Thought Flow~\citep{ltf2026latent} explores the idea of compressing the sequence length. We believe that this general direction is also very promising, but the exploration thereof should be deferred to another dedicated work. Other works consider using an existing strong AR model for encoding, such as ~\citet{ldlm}, which uses the contextual embeddings of GPT-2~\citep{gpt2}. While offering better performance, the approach is naturally more expensive computationally at train time.

\paragraph{Hybrid methods.} Besides CANDI~\citep{candi} and FMLM+~\citep{fmlmplus}, we note also the existence of the literature of \emph{discrete} diffusion with continuous augmentations, for instance via latents~\citep{cadd,ccdd}. Concurrently, \citet{jutrasdubé2026stickyjumpdiffusionsunifying} formalised these hybrids, together with masked and continuous diffusion, as special cases of a single process on the embedding space. Tokens form discrete anchors that the reverse process can jump onto, and the forward process determines when and where those jumps happen. In their framework, the CANDI-style hybrid underlying \semicat is recovered as the identity-kernel case. We see this as a principled way to design hybrid corruption processes, complementary to the variants we explore in~\Cref{app:hybrid-methods}.

\section{DoubleCat derivations}
\label{app:doublecat-proof}
\paragraph{Interpolant form of the inference trajectory.} A similar result can also be found in~\citet{fmlm,dfm}, where the authors provide an integral form of the two-time denoiser. Fix a position $\ell$ and write $\pi_t = \pi_t^{\theta,\ell}(\hat I_t)$ for the prediction of any denoiser along the inference trajectory, which solves
\begin{equation}
    \frac{\mathrm{d} \hat I_t^\ell}{\mathrm{d} t} = \dot\gamma_t\frac{\pi_t - \hat I_t^\ell}{1-\gamma_t}, \qquad \hat I_0^\ell = X_0^\ell.
\end{equation}
Then $\frac{\mathrm{d}}{\mathrm{d} t}\left[\hat I_t^\ell/(1-\gamma_t)\right] = \dot\gamma_t\pi_t/(1-\gamma_t)^2$. Integrating with $\gamma_0 = 0$, and using $\int_0^t \dot\gamma_s/(1-\gamma_s)^2\mathrm{d} s = \gamma_t/(1-\gamma_t)$, we obtain, for $t \in (0, 1)$,
\begin{equation}
    \hat I_t^\ell = (1-\gamma_t)X_0^\ell + \gamma_t\hat Y_t^\ell, \qquad \hat Y_t^\ell \coloneqq \int_0^t w_t(s)\pi_s\mathrm{d} s, \qquad w_t(s) \coloneqq \frac{1-\gamma_t}{\gamma_t}\frac{\dot\gamma_s}{(1-\gamma_s)^2}.
\end{equation}
Since $w_t \geq 0$ integrates to one on $[0, t]$, $\hat Y_t^\ell$ is a convex combination of the predictions, and thus lies on the simplex. Finally, by concavity of the entropy and Jensen's inequality, $H(\hat Y_t^\ell) \geq \int_0^t w_t(s)H(\pi_s)\mathrm{d}s$.

\paragraph{Degeneracy of the target denoiser.} Write $a_t \coloneqq \gamma_t/(1-\gamma_t)^2$ and $\sigma_t \coloneqq (1-\gamma_t)/\gamma_t$, so that $a_t\sigma_t^2 = 1/\gamma_t$. Since $I_t^\ell(X_0, X_1) = (1-\gamma_t)X_0^\ell + \gamma_t X_1^\ell$, the posterior reads
\begin{equation}
    p(x_1 \mid I_t = x) \propto p(x_1)\exp\left(-\frac{\norm{x - \gamma_t x_1}^2}{2(1-\gamma_t)^2}\right) \propto \tilde p(x_1)\exp\left(a_t\langle x, x_1\rangle\right),
\end{equation}
where $\tilde p(x_1) \coloneqq p(x_1)\exp(-\gamma_t^2\norm{x_1}^2/(2(1-\gamma_t)^2))$ does not depend on $x$. Hence, $\nabla_x \log p(x_1 \mid I_t = x) = a_t(x_1 - D_t^\star(x))$, and differentiating $D_t^\star(x) = \E[X_1 \mid I_t = x]$ under the expectation gives
\begin{equation}
    \nabla D_t^\star(x) = a_t\E\left[X_1(X_1 - D_t^\star(x))^\top \mid I_t = x\right] = a_t\cov(X_1 \mid I_t = x).
\end{equation}
This holds for any prior $p$. Consider the $(\ell, \ell)$ block, $\nabla_{x^\ell} D_t^{\star,\ell}(x) = a_t\cov(X_1^\ell \mid I_t = x)$. For one-hot data, $X_1^\ell \in \{e_v\}_{v\in V}$ and $D_t^{\star,\ell}(x) \eqqcolon q$ is the posterior over tokens, so that $\cov(X_1^\ell \mid I_t = x) = \mathrm{diag}(q) - qq^\top$, the trace of which is $1 - \norm{q}^2$. In particular, on $U_\eta^{t,\ell}$, $\mathrm{tr}\nabla_{x^\ell} D_t^{\star,\ell}(x) \geq \eta a_t$, which blows up as $t$ tends to $1$.

In contrast, for continuous densities, if $p_1$ is log-concave, the posterior is $p_1$ times a Gaussian of covariance $\sigma_t^2\matid$, and the Brascamp--Lieb variance inequality~\citep{brascampLieb} bounds the variance of any linear function $x\mapsto \langle u, x\rangle$ by $\sigma_t^2\norm{u}^2$, \ie $\cov(X_1 \mid I_t = x) \preceq \sigma_t^2\matid$. Therefore, for all $x$ and $t$,
\begin{equation}
    0 \preceq \nabla D_t^\star(x) \preceq a_t\sigma_t^2\matid = \frac{1}{\gamma_t}\matid,
\end{equation}
and so the Jacobian remains bounded as $t$ tends to $1$, since $\gamma_1 = 1$.

Finally, for the training mass, consider the token-wise posterior under a uniform prior, $q = q(I_t^\ell)$ (which coincides with $D_t^{\star, \ell}$ for this choice of prior), as in~\eqref{eq:bayes-posterior}, and let $q_{\max} \coloneqq \max_v q_v$. Since $q_{\max}^2 \leq \norm{q}^2 \leq q_{\max}$,
\begin{equation}
    1 - q_{\max} \leq 1 - \norm{q}^2 \leq (1 - q_{\max})(1 + q_{\max}) \leq 2(1 - q_{\max}).
\end{equation}
Under a uniform prior, $\argmax_v~q_v = \argmax~I_t^\ell$, so that $\E[1 - q_{\max}] = \Prob(\argmax~I_t^\ell \neq \argmax~X_1^\ell) \eqqcolon P_e^\ell(t)$. Hence, $P_e^\ell(t) \leq \E[1 - \norm{q(I_t^\ell)}^2] \leq 2P_e^\ell(t)$, and Markov's inequality gives $\Prob(I_t \in U_\eta^{t,\ell}) \leq 2P_e^\ell(t)/\eta$, which averages over positions to $2P_e(t)/\eta$.

\section{Extended SemiCat presentation}
\label{app:semicat}
The sampling algorithm for \semicat is provided in~\Cref{alg:semicat-sampling}. It follows the sampling procedure of CANDI~\citep{candi}. Moreover, we propose~\Cref{alg:semicat-tuned-sampling} for a more sophisticated sampling scheme, which mostly introduces DDIM-style~\citep{ddim} renoising, and confidence-based ``remasking'' (return to continuous state) inspired by the procedure of ReMDM~\citep{remdm}.
\begin{algorithm}[h]
\caption{\semicat sampling.}
\label{alg:semicat-sampling}
\begin{algorithmic}[1]
\Require Denoiser $\pi^\theta$, schedule $\gamma$, time grid $0 = t_0 < \dots < t_N = 1$, temperature $T$.
\State $x^\ell \sim p_0$ and $r^\ell \gets 0$ for all $\ell$ \Comment{$r^\ell = 1$: revealed}
\For{$n = 0, \ldots, N-1$}
    \State $p \gets \pi^\theta_{t_n}(x)$
    \For{$\ell$ such that $r^\ell = 0$}
        \State $x^\ell \gets x^\ell + \frac{\gamma_{t_{n+1}} - \gamma_{t_n}}{1 - \gamma_{t_n}}\left(p^\ell - x^\ell\right)$ \Comment{Euler step towards $\pi^\theta$}
        \State Draw $M^\ell \sim \ber\left(\frac{t_{n+1} - t_n}{1 - t_n}\right)$
        \If{$M^\ell = 1$}
            \State Draw $v^\ell \sim \cat\big((p^\ell)^{1/T}\big)$
            \State $x^\ell \gets e_{v^\ell}$
            \State $r^\ell \gets 1$
        \EndIf
    \EndFor
\EndFor
\State \Return $\left(\argmax_{v} x^{\ell}_v\right)_{\ell=1}^L$
\end{algorithmic}
\end{algorithm}

\begin{algorithm}[h]
\caption{\semicat tuned sampling.}
\label{alg:semicat-tuned-sampling}
\begin{algorithmic}[1]
\Require Denoiser $\pi^\theta$, schedule $\gamma$, uniform time grid $t_n = n/N$, DDIM-stochasticity $\eta_\mathrm{max}$, remask budget $\rho$, uniform share $\lambda$.
\State $x^\ell \sim p_0$ and $r^\ell \gets 0$ for all $\ell$ \Comment{$r^\ell = 1$: revealed}
\For{$n = 0, \ldots, N-1$}
    \For{$\ell$ such that $r^\ell = 1$}
        \State Draw $R^\ell \sim \ber\left(\min\!\big(1, \frac{\rho}{N}\big(\lambda + (1 - \lambda)(1 - c^\ell)\big)\big)\right)$\Comment{Confidence-based remask}
        \If{$R^\ell = 1$}
            \State Draw $\epsilon^\ell \sim p_0$
            \State $x^\ell \gets (1 - \gamma_{t_n})\epsilon^\ell + \gamma_{t_n} x^\ell$
            \State $r^\ell \gets 0$
        \EndIf
    \EndFor
    \State $p \gets \pi^\theta_{t_n}(x)$
    \State $q \gets 1 - \frac{L\,(1 - t_{n+1})}{\lvert\{\ell : r^\ell = 0\}\rvert}$ if $n < N - 1$, else $q \gets 1$ \Comment{Paced reveal}
    \State $q\gets \mathrm{clamp}(q, 0, 1)$
    \State $\eta \gets \eta_\mathrm{max} \sqrt{1 - \Big(\frac{\gamma_{t_n}(1 - \gamma_{t_{n+1}})}{\gamma_{t_{n+1}}(1 - \gamma_{t_n})}\Big)^2}$
    \For{$\ell$ such that $r^\ell = 0$}
        \State $\hat x_0^\ell \gets \frac{x^\ell - \gamma_{t_n} p^\ell}{1 - \gamma_{t_n}}$, draw $\epsilon^\ell \sim p_0$, set $\hat x_0^\ell \gets \sqrt{1 - \eta^2}\,\hat x_0^\ell + \eta\epsilon^\ell$
        \State $x^\ell \gets (1 - \gamma_{t_{n+1}})\hat x_0^\ell + \gamma_{t_{n+1}} p^\ell$ \Comment{DDIM step towards $\pi^\theta$}
        \State Draw $M^\ell \sim \ber(q)$
        \If{$M^\ell = 1$}
            \State $v^\ell \gets \argmax_{v'} p^\ell_{v'}$
            \State $x^\ell \gets e_{v^\ell}$\Comment{top-$k$=1 sampling}
            \State $c^\ell \gets p^\ell_{v^\ell}$\Comment{Record confidence for remasking}
            \State $r^\ell \gets 1$
        \EndIf
    \EndFor
\EndFor
\State \Return $\left(\argmax_{v} x^{\ell}_v\right)_{\ell=1}^L$
\end{algorithmic}
\end{algorithm}

\section{Additional background}
\subsection{Discrete diffusion}
\label{app:discrete-diff-background}

Discrete diffusion models corrupt a sequence by acting directly on $V^L$, and learn to invert that corruption. We keep the flow matching time convention of the main text: $t = 1$ carries the data and $t = 0$ the noise, so the signal level $\gamma: [0,1]\to[0,1]$ is increasing with $\gamma_0 = 0$ and $\gamma_1 = 1$, exactly as for the flow schedules of~\Cref{sec:background}. Many works in the discrete diffusion literature use the opposite diffusion convention.

\paragraph{The forward process.} The two families below are instances of D3PM~\citep{d3pm}, which builds on~\citet{nonequilibrium,multinomialdiffusion,ddpm}, and whose continuous-time limits are continuous-time Markov chains~\citep{ctmc}. The corruption factorises over positions, and each position either keeps its token or is replaced by a draw from a fixed reference distribution $q_\mathrm{ref}$ on $V$:
\begin{equation}
\label{eq:app-dd-forward}
    q(X_t^\ell \mid \vx^\ell) = \gamma_t\,\delta_{\vx^\ell} + (1-\gamma_t)\,q_\mathrm{ref}.
\end{equation}
Masked diffusion models (MDMs) take $q_\mathrm{ref} = \delta_\masktok$, for an absorbing mask token $\masktok \notin V$, so that $X_t \in (V\cup\{\masktok\})^L$ while uniform-state diffusion models (USDMs) take $q_\mathrm{ref} = \unif(V)$, so that $X_t\in V^L$. The two choices give different objectives and different samplers.

\paragraph{Masked diffusion.} The denoiser takes the corrupted sequence and returns a distribution over $V$ at every position, \ie $\pi^{\theta,\ell}_t(X_t) \in \R^\vocabsize$. Two constraints make it exact: it puts no mass on $\masktok$, and it copies positions that are already unmasked, since the forward process never changes them~\citep{mdlm,md4}. The reverse process only unmasks at rate
\begin{equation}
\label{eq:app-mdm-rate}
    \hat{R}_t^\ell(\masktok, j) = \frac{\dot\gamma_t}{1-\gamma_t}\,\indic[j = \vx^\ell],
    \qquad
    \hat{R}_t^{\theta,\ell}(\masktok, j) = \frac{\dot\gamma_t}{1-\gamma_t}\,
    \pi^{\theta,\ell}_t(X_t)_j,
\end{equation}
and since $\pi^{\theta,\ell}_t(X_t)$ sums to one, the total outflow $\sum_j \hat{R}^{\theta,\ell}_t(\masktok,j)$ does not depend on $\theta$. Under these constraints the continuous-time bound collapses to a masked cross-entropy,
\begin{equation}
\label{eq:app-mdm-nelbo}
    -\log p_\theta(\vx) \leq -\int_0^1 \frac{\dot\gamma_t}{1-\gamma_t}\,
    \E_{X_t \sim q(\cdot\,\mid\,\vx)}\left[
    \sum_{\ell\,:\,X_t^\ell = \masktok}
    \log\left\langle X_1^\ell, \pi^{\theta,\ell}_t(X_t)\right\rangle
    \right]\mathrm{d}t,
\end{equation}
where $X_1^\ell = e_{\vx^\ell}$ as in~\Cref{sec:background}, and which, for the linear schedule $\gamma_t = t$, weighs the loss by $1/(1-t)$. Sampling reverses~\eqref{eq:app-dd-forward} by unmasking a subset of positions per step; a token, once revealed, is never revisited. MDMs are currently the most common setting of recent diffusion language models at scale~\citep{llada}.

\paragraph{Uniform-state diffusion.} With $q_\mathrm{ref} = \unif(V)$ the corrupted sequence contains no marker of which positions are wrong, so the model must both locate and correct the errors. Nothing is frozen, so the sampler may revise any token at any time rather than committing once. For completeness, we give the closed forms which are those of the uniform kernel of~\citet{d3pm,multinomialdiffusion}.

Let us write the marginal at position $\ell$, and the same quantity with the data replaced by the denoiser's prediction as
\begin{equation}
\label{eq:app-usdm-marginals}
    q_t^\ell(j) \coloneqq \gamma_t\,\indic[j = \vx^\ell] + \frac{1-\gamma_t}{\vocabsize},
    \qquad
    q_t^{\theta,\ell}(j) \coloneqq \gamma_t\,\pi^{\theta,\ell}_t(X_t)_j
    + \frac{1-\gamma_t}{\vocabsize}.
\end{equation}

Because the uniform kernels compose $K_a K_b = K_{ab}$, the transition from a cleaner time $s$ to a noisier time $t < s$ retains with probability $\gamma_t/\gamma_s$, and Bayes' rule gives the exact reverse posterior as an element wise product
\begin{equation}
\label{eq:app-usdm-posterior}
    q(X_s^\ell \mid X_t^\ell, \vx^\ell) \propto
    \left(\frac{\gamma_t}{\gamma_s}\,e_{X_t^\ell}
    + \Big(1 - \frac{\gamma_t}{\gamma_s}\Big)\pcatunif\right)
    \odot
    \Big(\gamma_s\,e_{\vx^\ell} + (1-\gamma_s)\pcatunif\Big),
\end{equation}
with $p_\theta(X_s^\ell\mid X_t^\ell)$ obtained by substituting $\pi^{\theta,\ell}_t(X_t)$ for $e_{\vx^\ell}$. This is the kernel a finite-step sampler uses. Its continuous-time limit is a jump process: writing $\nu_t \coloneqq \dot\gamma_t/\gamma_t$ for the corruption rate, the noising process leaves each position at rate $\nu_t$ towards $\unif(V)$, and time reversal gives, for $j \neq i$
\begin{equation}
\label{eq:app-usdm-rates}
    \hat{R}_t^\ell(i, j) = \frac{\nu_t}{\vocabsize}\cdot\frac{q_t^\ell(j)}{q_t^\ell(i)},
    \qquad
    \hat{R}_t^{\theta,\ell}(i, j)
    = \frac{\nu_t}{\vocabsize}\cdot\frac{q_t^{\theta,\ell}(j)}{q_t^{\theta,\ell}(i)}.
\end{equation}

The prior term of the ELBO vanishes, since $\gamma_0 = 0$ makes $q(X_0\mid\vx)$ exactly the uniform prior, and the remainder is the integrated KL rate between the two jump processes~\citep{ctmc}, which after dropping $\theta$-independent terms would become
\begin{equation}
\label{eq:app-usdm-nelbo}
    -\log p_\theta(\vx) \leq \frac{1}{\vocabsize}\int_0^1 \nu_t\,
    \E_{X_t\sim q(\cdot\,\mid\,\vx)}\left[
    \sum_{\ell=1}^L \sum_{j \neq X_t^\ell}
    \left(
    \frac{q_t^{\theta,\ell}(j)}{q_t^{\theta,\ell}(X_t^\ell)}
    -
    \frac{q_t^{\ell}(j)}{q_t^{\ell}(X_t^\ell)}
    \log \frac{q_t^{\theta,\ell}(j)}{q_t^{\theta,\ell}(X_t^\ell)}
    \right)
    \right]\mathrm{d}t
    + \mathrm{const}.
\end{equation}

Unlike~\eqref{eq:app-mdm-rate}, the total outflow
$\sum_{j\neq X_t^\ell}\hat{R}^{\theta,\ell}_t(X_t^\ell, j)$ \emph{does} depend on $\theta$, so the first term does not drop and the objective is not a cross-entropy: the model is scored on the ratios $q_t^{\theta,\ell}(j)/q_t^{\theta,\ell}(X_t^\ell)$ rather than on the probability it assigns to the true token, and every position contributes at every $t$, not
only the corrupted ones. See~\citet{udlm,audmloo} for practical estimators of~\eqref{eq:app-usdm-nelbo}, and \citet{sedd}, which sidesteps it by learning the ratios $\pdata(\vy)/\pdata(\vx)$ directly with a score-entropy loss.

\paragraph{Relation to the continuous case.} USDMs are the discrete family closest to the flows of~\Cref{sec:background}: \citet{duo} show that a Gaussian diffusion pushed through an $\argmax$ matches the marginals of a uniform state diffusion. Discrete flow matching~\citep{dfmgat,discretefmcampbell} reformulates both families as probability paths with a prescribed interpolant, in a formalism much closer to that of~\Cref{sec:background}.

\subsection{The VDM noise schedule}
\label{app:vdm-snr}

Plaid~\citep{plaid,replaid} is a variational diffusion model~\citep{vdm} on token embeddings, so the weight in~\eqref{eq:plaid-elbo} is the derivative of a learnt signal-to-noise ratio. We recall here where it comes from, and why its endpoints are treated differently from its shape.

Following~\Cref{sec:background}, Plaid corrupts the embedded sequence as $I_t = \alpha_t X_0 + \beta_t \emb(X_1)$, with $X_0 \sim \normal(\vzero,\matid)$ and $\emb$ applied position-wise, and sets
\begin{equation}
    \SNR(t) \coloneqq \frac{\beta_t^2}{\alpha_t^2}.
\end{equation}
VDM takes the path variance-preserving, $\alpha_t^2 + \beta_t^2 = 1$, so that $\beta_t^2 = \mathrm{sigmoid}(\log\SNR(t))$ and the whole path is determined by $\log\SNR$ alone which it parametrises with a monotone network in $t$. As shown in~\eqref{eq:plaid-elbo}, the continuous time bound then splits into a prior term, a reconstruction term, and a diffusion term with the diffusion term being
\begin{equation}
\label{eq:app-vdm-diff}
    \loss_\mathrm{diff}(\theta) = \frac{1}{2}\,\E_{t\sim\unif(0,1),\,X_0}\left[
    \SNR'(t)\,\norm{\embhead(I_t, t) - \emb(\vx)}^2\right],
\end{equation}
with $\SNR' > 0$. The schedule enters~\eqref{eq:app-vdm-diff} twice: through the weight $\SNR'(t)$ and through $I_t$. The second dependence carries no extra freedom, since $\beta_t^2 = \mathrm{sigmoid}(\log\SNR(t))$ and $\alpha_t^2 = 1 - \beta_t^2$ are functions of $\SNR(t)$ alone. Let us write
\begin{equation}
\label{eq:app-vdm-snr-interp}
    J_w \coloneqq \sqrt{\frac{1}{1+w}}\,X_0 + \sqrt{\frac{w}{1+w}}\,\emb(\vx)
\end{equation}
for the corrupted sequence at $\SNR$ level $w$, we have $I_t = J_{\SNR(t)}$: two schedules that reach the same $\SNR$ give the same noisy input, whatever time they reach it at. The denoiser's second argument behaves the same way as VDM conditions it on $\log\SNR(t)$ rather than on $t$; we write $\embhead(\cdot, w)$ for it below. Substituting $w = \SNR(t)$ so that $\mathrm{d}w = \SNR'(t)\,\mathrm{d}t$,
\begin{equation}
\label{eq:app-vdm-invariance}
    \loss_\mathrm{diff}(\theta) = \frac{1}{2}\int_{\SNR(0)}^{\SNR(1)}
    \E_{X_0}\norm{\embhead(J_w, w) - \emb(\vx)}^2\,
    \mathrm{d}w,
\end{equation}
so, at a fixed denoiser, the bound depends on the schedule through its two endpoints $\SNR(0)$ and $\SNR(1)$ and nothing else. VDM uses both halves of this statement: the endpoints are learnt by the ELBO, and the shape in between is learnt by minimising the variance of the Monte Carlo estimate of~\eqref{eq:app-vdm-diff}. Plaid inherits both, which is what we mean by learnt log-SNR endpoints in~\Cref{tab:tinygsm-embedded}.

\section{Additional experimental results}

\subsection{Hyperparameters and experimental setup}
\label{app:hyperparameters}
\paragraph{Hyperparameters.} We provide all our hyperparameters in~\Cref{tab:hyperparameters}. Throughout, we use the standard diffusion transformer~\citep[DiT][]{dit} adapted by~\citet{sedd} for discrete diffusion. For one-hot flows, we also use a simple linear projection in $\R^{\vocabsize\times h}$, where $h$ is the hidden dimension of the transformer, following~\citet{cfm,fmlm}. Models are evaluated with the EMA weights, except for in the large Nemotron case. We evaluate using the final checkpoint. OWT and TinyGSM checkpoints are trained using $8\times$ H100. Nemotron $1.7$B checkpoints are trained with $256\times$ H100. All models are trained with AdamW~\citep{adamw}; the hyperparameters for which are provided in~\Cref{tab:hyperparameters} too.

\paragraph{L2 Weighting.} For one-hot flows (including base one-hot, CANDI, \doublecat, \semicat), for training, we weight the loss as in~\citet{scalingcfm}, following~\citet{dfm,geng2024consistencymodelseasy}, inversely proportional to the square root of the $L_2$ distance of the prediction to the target. Precisely, we divide the loss by $\stopgrad(\Delta)$ for $\Delta = \sqrt{\norm{\pi_t^{\theta,\ell}(I_t) - X_1^\ell} + \epsilon}$, with $\epsilon = 10^{-2}$.

\paragraph{OWT setup.} We use the typical OWT experimental setup: $1024$ long sequences, tokenised by the tokeniser of GPT-2~\citep{gpt2}.

\paragraph{TinyGSM.} The TinyGSM dataset~\citep{tinygsm} contains $11.8$M grade school mathematics questions paired with Python code traces that generate the correct solution. Our default hyperparameters are provided in~\Cref{tab:hyperparameters}. Since those differ slightly from the standard settings in the literature, we also use the standard ones for our final \semicat run in~\Cref{tab:hyperparameters-tinygsm-standard}. See~\Cref{tab:hyperparameters-tinygsm-standard} for a comparison.
\begin{table}[tbp]
\centering
\small
\begin{tabular}{lccc}
\toprule
& OpenWebText & TinyGSM & Nemotron ($1.7$B) \\
\midrule
\multicolumn{4}{l}{\emph{Architecture (DiT)}} \\
Hidden size & $768$ & $768$ & $2304$ \\
adaLN cond.\ dim & $128$ & $128$ & $128$ \\
Layers & $12$ & $12$ & $18$ \\
Attention heads & $12$ & $12$ & $18$ \\
FFN (SwiGLU) multiplier & $4$ & $4$ & $3$ \\
Dropout & $0$ & $0$ & $0$ \\
Sequence length $L$ & $1024$ & $512$ & $2048$ \\
Vocabulary $\vocabsize$ & $50\,257$ (GPT-2) & $151\,680$ (Qwen2) & $100\,352$ (TikToken) \\
Parameters, trunk & $121$M & $121$M & $1\,281$M \\
Parameters, in $+$ out proj. & $77$M & $233$M & $462$M \\
Parameters, total & $199$M & $354$M & $1\,743$M \\
\midrule
\multicolumn{4}{@{}l}{\emph{Optimisation (AdamW)}} \\
Optimiser & \multicolumn{2}{c}{($\beta_1{=}0.9$, $\beta_2{=}0.999$, $\epsilon{=}10^{-8}$)} & ($\beta_1{=}0.9$, $\beta_2{=}0.95$, $\epsilon{=}10^{-8}$) \\
Weight decay & \multicolumn{2}{c}{$0.1$} & $0.1$ \\
Gradient clipping & \multicolumn{2}{c}{$1.0$} & $1.0$ \\
Schedule & \multicolumn{2}{c}{Cosine decay w/ linear warmup} & Cosine decay w/ linear warmup \\
Warmup steps & \multicolumn{2}{c}{$3\,000$} & $2\,000$ \\
Peak / final LR & \multicolumn{2}{c}{$3{\times}10^{-4}$ / $10^{-5}$} & $3{\times}10^{-4}$ / $3{\times}10^{-5}$ \\
Total steps & $1\,000\,000$ & $200\,000$ & $400\,457$ \\
Global batch size & $512$ & $256$ & $2\,560$ \\
Precision & \multicolumn{2}{c}{bfloat16 mixed} & bfloat16 mixed \\
EMA decay & \multicolumn{2}{c}{$0.9999$} & none \\
\bottomrule
\end{tabular}
\caption{Hyperparameters of our models on OpenWebText, TinyGSM and Nemotron.}
\label{tab:hyperparameters}
\end{table}

\begin{table}[tbp]
\centering
\small
\begin{tabular}{lcc}
\toprule
& TinyGSM (ours) & TinyGSM (standard) \\
\midrule
Global batch size & $256$ & $512$ \\
Total steps & $200\,000$ & $250\,000$ \\
Schedule & Cosine decay w/ linear warmup & Constant w/ linear warmup \\
Warmup steps & $3\,000$ & $2\,500$ \\
Peak / final LR & $3{\times}10^{-4}$ / $10^{-5}$ & $3{\times}10^{-4}$ / $3{\times}10^{-4}$ \\
Weight decay & $0.1$ & $0$ \\
Optimiser & \multicolumn{2}{c}{AdamW ($\beta_1{=}0.9$, $\beta_2{=}0.999$, $\epsilon{=}10^{-8}$)} \\
Gradient clipping & \multicolumn{2}{c}{$1.0$} \\
Precision & \multicolumn{2}{c}{bfloat16 mixed} \\
EMA decay & \multicolumn{2}{c}{$0.9999$} \\
Architecture & \multicolumn{2}{c}{As in~\Cref{tab:hyperparameters}} \\
\bottomrule
\end{tabular}
\caption{A comparison of TinyGSM hyperparameters between ours and the standard ones found in the literature. We run most of our experiments on our own, and report \semicat's numbers on the standard configuration too (\Cref{tab:tinygsm-doublecat}).}
\label{tab:hyperparameters-tinygsm-standard}
\end{table}

\subsection{DoubleCat ablations}

The ablations over the choice of prior and the choice of schedule for \doublecat are available in~\Cref{tab:doublecat-prior,tab:doublecat-schedule}. We also provide the full ablation of \doublecat and one-hot flow with self-conditioning on and off in~\Cref{tab:self-conditioning-doublecat}.
\begin{table}[tbp]
\centering
\small
\begin{tabular*}{\textwidth}{@{\extracolsep{\fill}}lcccccc@{}}
\toprule
& \multicolumn{4}{c}{Accuracy (\%)} & \multicolumn{2}{c}{Best (\%)} \\
\cmidrule(lr){2-5}\cmidrule(lr){6-7}
$Y_0^\ell$ & $T = 1$ & $T = 0.5$ & $T = 0.2$ & $\mathrm{top}\text{-}k\text{=}1$ & Runs ($\uparrow$) & Name err. ($\downarrow$) \\
\midrule
Barycentre & $4.70$ & $6.52$ & $6.29$ & $7.51$ & $\mathbf{55.57}$ & $\mathbf{43.90}$ \\
Uniform & $\mathbf{6.14}$ & $\mathbf{9.17}$ & $\mathbf{10.84}$ & $\mathbf{11.30}$ & $53.53$ & $46.10$ \\
Uniform vertex & $5.31$ & $7.05$ & $7.81$ & $8.49$ & $51.48$ & $47.99$ \\
\bottomrule
\end{tabular*}
\caption{Ablation on the simplex prior $Y_0^\ell$ for \doublecat, at NFE $256$. Accuracy is reported per sampler; the last two columns report the best value attained over those samplers. ``Barycentre'' is $\delta_{\pcatunif}$, ``Uniform'' is uniform over the simplex, $\unif(\Delta^\vocabsize)$, and ``uniform vertex'' is uniform over the vertices of the simplex (\ie the one-hot encoding of $\unif(V)$). The inner schedule is set to $\kappa_t \equiv t^2$. We retain the uniform distribution which consistently outperforms the other choices on accuracy.}
\label{tab:doublecat-prior}
\end{table}

\begin{table}[tbp]
\centering
\small
\begin{tabular*}{\textwidth}{@{\extracolsep{\fill}}lcccccc@{}}
\toprule
& \multicolumn{4}{c}{Accuracy (\%)} & \multicolumn{2}{c}{Best (\%)} \\
\cmidrule(lr){2-5}\cmidrule(lr){6-7}
$\kappa_t$ & $T = 1$ & $T = 0.5$ & $T = 0.2$ & $\mathrm{top}\text{-}k\text{=}1$ & Runs ($\uparrow$) & Name err. ($\downarrow$) \\
\midrule
$\sqrt{t}$ & $\mathbf{9.70}$ & $\mathbf{13.50}$ & $\mathbf{14.18}$ & $\mathbf{14.78}$ & $\mathbf{63.61}$ & $\mathbf{36.16}$ \\
$t$ & $6.97$ & $9.55$ & $10.61$ & $10.84$ & $55.19$ & $44.43$ \\
$t^2$ & $6.14$ & $9.17$ & $10.84$ & $11.30$ & $53.53$ & $46.10$ \\
$t^3$ & $5.00$ & $7.35$ & $7.05$ & $8.79$ & $51.18$ & $48.45$ \\
\bottomrule
\end{tabular*}
\caption{Ablation on the simplex schedule $\kappa_t$ for \doublecat, at NFE $256$, with $Y_0^\ell \sim \unif(\Delta^\vocabsize)$ for all runs. Accuracy is reported per sampler; the last two columns report the best value attained over those samplers.}
\label{tab:doublecat-schedule}
\end{table}

\begin{table}[tbp]
  \centering
  \small
  \setlength{\tabcolsep}{4pt}
  \begin{tabular*}{\textwidth}{@{\extracolsep{\fill}}llcccccc@{}}
    \toprule
    & & \multicolumn{4}{c}{\textbf{Accuracy} (\%, $\uparrow$)} & \multicolumn{2}{c}{\textbf{Best} (\%)} \\
    \cmidrule(lr){3-6}\cmidrule(lr){7-8}
    Method & Self-cond. & $T = 1$ & $T = 0.5$ & $T = 0.2$ & $\mathrm{top}\text{-}k\text{=}1$ & Runs ($\uparrow$) & Name err. ($\downarrow$) \\
    \midrule
    \multirow{2}{*}{One-hot flow} & Off & $3.28$ & $4.42$ & $3.89$ & $3.64$ & $50.91$ & $48.07$ \\
     & On & $\mathbf{6.44}$ & $\mathbf{6.75}$ & $\mathbf{8.34}$ & $\mathbf{6.97}$ & $\mathbf{61.79}$ & $\mathbf{37.60}$ \\
    \midrule
    \multirow{2}{*}{\doublecat} & Off & $11.45$ & $15.85$ & $16.00$ & $15.92$ & $69.67$ & $30.02$ \\
     & On & $\mathbf{16.83}$ & $\mathbf{20.77}$ & $\mathbf{20.70}$ & $\mathbf{19.79}$ & $\mathbf{75.21}$ & $\mathbf{24.34}$ \\
    \bottomrule
  \end{tabular*}
  \caption{Self-conditioning effect on one-hot flows and \doublecat for schedule $\kappa_t \equiv \sqrt{t}$.}
  \label{tab:self-conditioning-doublecat}
\end{table}

\subsection{OpenWebText Pareto Frontiers}
We provide additional OWT Pareto frontiers in~\Cref{fig:plaid-pareto-fronts-dims,fig:cross-pareto-front-owt}.
\begin{figure}[tbp]
    \centering
    \includegraphics[width=\linewidth]{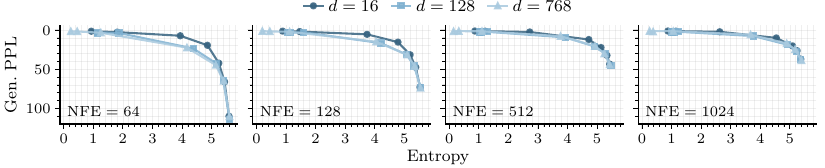}
    \caption{\genppl as measured by GPT-2 for models trained on OpenWebText with sequence lengths of $1024$ tokens, using the GPT-2 tokeniser. All Plaid runs for different embedding dimensionalities, $d$.}
    \label{fig:plaid-pareto-fronts-dims}
\end{figure}
\begin{figure}[tbp]
    \centering
    \includegraphics[width=\linewidth]{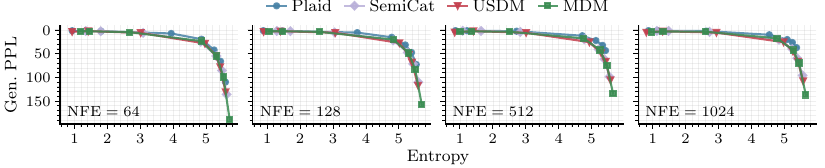}
    \caption{\genppl as measured by GPT-2 for models trained on OpenWebText with sequence lengths of $1024$ tokens, using the GPT-2 tokeniser. Plaid is for $d=16$.}
    \label{fig:cross-pareto-front-owt}
\end{figure}

\subsection{Corruption as data augmentation}
We show, here, that corruption seen as data augmentation, instead of a proper change in the interpolant, lowers the performance of the resulting model on the unconditional generation task of OWT by substantially diminishing its diversity (entropy). To be precise, for $U^\ell \sim \unif(V)$ i.i.d., we use interpolants instead pointing towards
\begin{equation}
    \tilde{X}_1^\ell = \zeta_t X_1^\ell + (1 - \zeta_t)U^\ell,
\end{equation}
with various choices of $\zeta_t \sim \ber(1-\rho_t)$, constant and scheduled. Results are provided in~\Cref{tab:corruption-augmentation}. Any score comparable in \genppl and entropy is found for very low constant choices of $\rho_t$; lower (better) \genppl scores imply entropy collapse.

\begin{table}[tbp]
\centering
\small
\begin{tabular*}{\textwidth}{@{\extracolsep{\fill}}lccc@{}}
\toprule
& \multicolumn{3}{c}{\genppl ($\downarrow$) (entropy)} \\
\cmidrule(lr){2-4}
Flip probability $\rho_t$ & NFE $= 50$ & NFE $= 100$ & NFE $= 200$ \\
\midrule
\emph{Baseline, $1$M steps, NFE $= 128$}\\
\quad One-hot flow & \multicolumn{3}{c}{$85.64~(5.45)$} \\
\quad \doublecat & \multicolumn{3}{c}{$66.02~(5.38)$} \\
\quad \semicat & \multicolumn{3}{c}{$117.00~(5.62)$} \\
\midrule
\emph{no $L_2$ weighting}\\
\quad $\rho_t \equiv 0.01$ & $127.70~(5.44)$ & $118.74~(5.52)$ & $115.59~(5.51)$ \\
\quad $\rho_t \equiv 0.1$ & $91.58~(5.09)$ & $81.71~(5.19)$ & $79.62~(5.12)$ \\
\quad $\rho_t \equiv 0.25$ & $68.97~(4.72)$ & $61.64~(4.83)$ & $65.67~(4.88)$ \\
\quad $\rho_t \equiv 0.5$ & $48.94~(4.13)$ & $44.23~(4.16)$ & $47.68~(4.18)$ \\
\quad $\rho_t = t$ & $22.88~(3.12)$ & $22.07~(3.30)$ & $23.13~(3.18)$ \\
\quad $\rho_t = \gamma_t$ & $26.80~(2.86)$ & $22.41~(2.86)$ & $24.29~(2.99)$ \\
\midrule
\emph{with $L_2$ weighting}\\
\quad $\rho_t \equiv 0.01$ & $100.44~(5.42)$ & $86.72~(5.43)$ & $85.33~(5.45)$ \\
\quad $\rho_t \equiv 0.1$ & $75.00~(5.14)$ & $65.62~(5.19)$ & $63.50~(5.18)$ \\
\quad $\rho_t \equiv 0.25$ & $57.64~(4.84)$ & $53.68~(4.89)$ & $51.50~(4.89)$ \\
\quad $\rho_t \equiv 0.5$ & $44.43~(4.28)$ & $39.28~(4.30)$ & $41.99~(4.31)$ \\
\quad $\rho_t = t$ & $27.99~(3.37)$ & $28.55~(3.46)$ & $29.26~(3.36)$ \\
\quad $\rho_t = \gamma_t$ & $24.60~(3.00)$ & $21.63~(2.98)$ & $23.82~(3.13)$ \\
\bottomrule
\end{tabular*}
\caption{OWT \genppl (GPT-2), token entropy in parentheses, at $200$k steps, temperature $1$. The corruption flips each $X_1^\ell$ independently, with probability $\rho_t$, to a token drawn uniformly from $V$; it enters the interpolant only, leaving the loss target clean. Trained with $\gamma$ being the usual mixture schedule.}
\label{tab:corruption-augmentation}
\end{table}

\subsection{General hybrid corruption processes}
\label{app:hybrid-methods}
We also explore the design of hybrid methods based on continuous flows. Inspired by AUDM~\citep{audmloo}, we introduce \alphacat, a generalisation of CANDI~\citep{candi}, where the model is instead exposed to the following interpolants: letting $\tau^\ell\sim \unif(0,1)$ i.i.d., as before, and $U^\ell \mid X_1^\ell = \vx^\ell \sim \cat(\alpha\pcatunif + (1-\alpha) e_{\vx^\ell})$ i.i.d., we define the interpolants to be, for each $\ell$, for $\tau^\ell > t$,
\begin{equation}
    I_t^\ell(X_0, X_1) = (1-\gamma_t)X_0^\ell + \gamma_t e_{U^\ell},
\end{equation}
and, for $\tau^\ell \leq t$,
\begin{equation}
    I_t^\ell(X_0, X_1) = X_1^\ell.
\end{equation}
The model is thus effectively exposed to various levels $\alpha$ of additional discrete corruption, recovering CANDI at $\alpha = 0$. Simply put, at each position, the model flows towards either a random vertex with probability $\alpha$ or to the true one otherwise, until its reveal time $\tau^\ell$. We report the TinyGSM results in~\Cref{tab:alphacat-tinygsm}. As we can see, with increasing $\alpha$, we recover better performance until $\alpha=0.75$ as it recovers the effect of \semicat. Nonetheless, for $\alpha = 1$, the model learns a leave-one-out denoiser; that is to say, it learns $\E[X_1^\ell \mid I_t^{-\ell} = x_t^{-\ell}]$, which ignores position $\ell$ to predict token $\ell$, like discrete diffusion methods~\citep{audmloo}. This is natural, since $\MI(I_t^\ell; X_1^\ell) = 0$ for $t < \tau^\ell$ and $\alpha = 1$, where $\MI$ denotes mutual information. Variants with $\alpha > 0$ thus partially defy the purpose of hybrid methods, as they rely on discretely revealed positions.

\begin{table}[t]
  \centering
  \small
  \setlength{\tabcolsep}{4pt}
  \begin{tabular*}{\textwidth}{@{\extracolsep{\fill}}lcccccc@{}}
    \toprule
    & \multicolumn{4}{c}{\textbf{Accuracy} (\%, $\uparrow$)} & \multicolumn{2}{c}{\textbf{Best} (\%)} \\
    \cmidrule(lr){2-5}\cmidrule(lr){6-7}
    $\alpha$ & $T = 1$ & $T = 0.5$ & $T = 0.2$ & $\mathrm{top}\text{-}k\text{=}1$ & Runs ($\uparrow$) & Name err. ($\downarrow$) \\
    \midrule
    $0$ & $7.20$ & $9.55$ & $9.63$ & $8.72$ & $69.22$ & $29.26$ \\
    $0.25$ & $13.50$ & $18.35$ & $17.89$ & $17.97$ & $79.38$ & $19.86$ \\
    $0.5$ & $\mathbf{19.26}$ & $23.88$ & $25.17$ & $25.93$ & $84.91$ & $14.71$ \\
    $0.75$ & $15.47$ & $\mathbf{25.25}$ & $\mathbf{29.57}$ & $\mathbf{29.26}$ & $\mathbf{87.11}$ & $\mathbf{12.81}$ \\
    $1$ & $14.56$ & $23.86$ & $27.74$ & $27.37$ & $86.73$ & $\mathbf{12.81}$ \\
    \bottomrule
  \end{tabular*}
  \caption{TinyGSM accuracy at NFE $=256$, for \alphacat{} at corruption levels $\alpha \in \{0, 0.25, 0.5, 0.75, 1\}$.}
  \label{tab:alphacat-tinygsm}
\end{table}

\subsection{Acceleration via D-MMD}
\label{app:dmmd}
We apply D-MMD distillation~\citep{dmmd,mmd} to our pre-trained \semicat model. We tune the temperature of the teacher $T$, and attempt some simple techniques such as sampling boundary pairs $(s,t) = (0,1)$ with a probability $p$, and a gap curriculum updated every $n$ steps, both inspired by~\citet{fmlm}. We find that the resulting model is competitive with FMLM+~\citep{fmlmplus}, and tends to have higher accuracy in the $1$-step and $32$-step case, while FMLM+ retains a consistent and balanced accuracy for different numbers of steps. Our results are obtained with simple naive sampling, and we hypothesise that more advanced samplers could further improve the performance (\eg confidence-based).

\begin{table}[t]
  \centering
  \small
  \setlength{\tabcolsep}{4pt}
  \begin{tabularx}{\textwidth}{@{}lYYYYYY@{}}
    \toprule
    & \multicolumn{6}{c}{\textbf{Accuracy} (\%, $\uparrow$)} \\
    \cmidrule(lr){2-7}
    Student & NFE $1$ & $2$ & $4$ & $8$ & $16$ & $32$ \\
    \midrule
    FMLM+~\citep{fmlmplus} & $0.2$ & $1.2$ & $5.7$ & $12.4$ & $17.8$ & $19.9$ \\
    \midrule
    \semicat teacher & -- & -- & -- & $5.5$ & $12.1$ & $20.1$ \\
    \midrule
    $T = 1$ & $0.2$ & $0.5$ & $1.6$ & $5.4$ & $13.0$ & $19.0$ \\
    $T = 0.5$ & $0.7$ & $1.2$ & $2.8$ & $9.8$ & $16.5$ & $24.0$ \\
    $T = 0.3$ & $0.5$ & $0.8$ & $3.9$ & $9.2$ & $16.3$ & $22.7$ \\
    $T = 0.2$ & $0.6$ & $1.8$ & $4.8$ & $10.9$ & $16.3$ & $22.5$ \\
    \hlrow $T = 0.05$ & $\mathbf{1.1}$ & $1.8$ & $3.6$ & $9.0$ & $17.1$ & $\mathbf{24.2}$ \\
    \hlrow \quad $+$ boundary pairs $3\%$ & $\mathbf{1.1}$ & $\mathbf{2.5}$ & $\mathbf{5.6}$ & $9.6$ & $15.9$ & $20.0$ \\
    \hlrow \quad $+$ boundary pairs $25\%$ & $0.6$ & $1.1$ & $1.9$ & $4.9$ & $10.0$ & $15.5$ \\
    \hlrow \quad $+$ gap curriculum $2$k & $\mathbf{1.1}$ & $2.4$ & $5.4$ & $\mathbf{11.7}$ & $\mathbf{19.1}$ & $22.1$ \\
    \hlrow \quad $+$ gap curriculum $2$k, $3\%$ & $0.6$ & $2.0$ & $3.9$ & $6.7$ & $13.8$ & $20.0$ \\
    \hlrow \quad $+$ gap curriculum $5$k, $3\%$ & $0.7$ & $1.5$ & $3.7$ & $7.8$ & $12.4$ & $20.4$ \\
    \bottomrule
  \end{tabularx}
  \caption{TinyGSM accuracy for D-MMD distillation of \semicat, at teacher temperature $T$. Students train for $20$k steps at constant learning rate $2\times10^{-5}$, with early stopping on GSM8K test set accuracy at $32$ NFE.}
  \label{tab:dmmd-tinygsm}
\end{table}

\section{Embedded spaces}
\subsection{Reproducing SemiCat's statistics on an embedded space}
\label{app:doublecat-embed-reprod}
Let us formally write out the interpolation incurred by \semicat. Given i.i.d.\ times $\tau^\ell \sim \unif(0, 1)$, at each position, we have, for $0 \leq t < \tau^\ell$,
\begin{equation}
I_t^\ell(X_0, X_1) = (1-\gamma_t) X_0^\ell + \gamma_t Y_t^\ell,
\end{equation}
and, for $t \geq \tau^\ell$,
\begin{equation}
I_t^\ell(X_0, X_1) = X_1^\ell.
\end{equation}
Thus, conditioning on the same times, we have either, for unrevealed times,
\begin{equation}
    W^\top I_t^\ell(X_0, X_1) \sim \normal\left( \gamma_t W^\top Y_t^\ell, (1-\gamma_t)^2 W^\top W \right).
\end{equation}
Moreover, since
\begin{equation}
    Y_t^\ell = (1-\kappa_t) Y_0^\ell + \kappa_t X_1^\ell,
\end{equation}
for a Dirichlet prior, the mean conditional on $X_1^\ell$ is
\begin{equation}
    (1-\kappa_t)W^\top\pcatunif + \kappa_t W^\top X_1^\ell,
\end{equation}
the first term corresponding to the barycentre of the embeddings. For times after reveal, $W^\top I_t^\ell(X_0, X_1) = W^\top X_1^\ell = \emb(X_1^\ell)$. The only difficulty to obtain the same statistics directly in the embedded space is to obtain the same covariance structure. To do so efficiently, we can compute $W^\top W$, which we can then decompose by Cholesky as $W^\top W = LL^\top$, and thus apply it to a Gaussian draw in lower dimension, $LZ$, for $Z\sim\normal(0, \matid)$, thus obtaining a covariance of $LL^\top = W^\top W$. Our trained models obtain the same final performance as normal \semicat training, but it is less stable due to the positive definiteness requirement of Cholesky.

\subsection{Embedding-dependent schedules}
\label{app:emb-dep-sched}
The decoding error rate for one-hot encoded data has the well-known form from~\citet{candi,duo}. We give here the building block needed to generalise it to arbitrary embeddings: the pairwise nearest-neighbour comparison. The full DER then follows from the joint behaviour of these $\vocabsize - 1$ comparisons, which we do not treat here as they are not independent.

\begin{propbox}
\begin{restatable}[Embedding-dependent pairwise decoding scores]{proposition}{emb-dep-sched}\label{prop:emb-dep-sched}
Given an embedding matrix $W \in \R^{\vocabsize \times d}$, and the interpolant $I_t^\ell = (1-\gamma_t)X_0^\ell + \gamma_t \emb(X_1^\ell)$ between a standard Gaussian and the token's embedding, define the nearest-neighbour decoding scores $S_t(v) \coloneqq \norm{\emb(v) - I_t^\ell}^2$. Then, for any pair of tokens $i \neq j$, writing $\Delta_{ij} \coloneqq \emb(j) - \emb(i)$,
\begin{equation}
    \big(S_t(i) - S_t(j)\big) \sim \normal\left(m_{ij}(t),\, 4(1-\gamma_t)^2\norm{\Delta_{ij}}^2\right),
\end{equation}
and hence, since decoding is an $\argmin$ over squared distances, token $i$ is preferred over token $j$ with probability
\begin{equation}
    \forall 0 \leq t \leq 1, \qquad \Prob\left(S_t(i) < S_t(j)\right) = 1 - \Phi\left(\frac{m_{ij}(t)}{2(1-\gamma_t)\norm{\Delta_{ij}}}\right),
\end{equation}
where $\Phi$ is the c.d.f.\ of the standard normal distribution and the mean score gap is
\begin{equation}
    m_{ij}(t) \coloneqq 2\gamma_t\left\langle \Delta_{ij}, \emb(X_1^\ell) \right\rangle + \norm{\emb(i)}^2 - \norm{\emb(j)}^2.
\end{equation}
\end{restatable}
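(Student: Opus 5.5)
Conditioning on the token $X_1^\ell$, so that $\emb(X_1^\ell)$ is a fixed vector, the plan is to show that $S_t(i) - S_t(j)$ is an affine function of the single Gaussian vector $X_0^\ell \sim \normal(\vzero, \matid)$. Once that is done, its law is Gaussian and only the mean and variance need to be read off. I would expand both squared norms and cancel the term $\norm{I_t^\ell}^2$ that they share:
\begin{equation}
    S_t(i) - S_t(j) = \norm{\emb(i)}^2 - \norm{\emb(j)}^2 - 2\left\langle \emb(i) - \emb(j), I_t^\ell\right\rangle = \norm{\emb(i)}^2 - \norm{\emb(j)}^2 + 2\left\langle \Delta_{ij}, I_t^\ell\right\rangle.
\end{equation}
This identity is deterministic and is the key simplification: the quadratic dependence on $I_t^\ell$ disappears, and the difference is linear in $I_t^\ell$.

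Next, substitute $I_t^\ell = (1-\gamma_t)X_0^\ell + \gamma_t\emb(X_1^\ell)$. This gives
\begin{equation}
    S_t(i) - S_t(j) = m_{ij}(t) + 2(1-\gamma_t)\left\langle \Delta_{ij}, X_0^\ell\right\rangle,
\end{equation}
where $m_{ij}(t)$ is exactly the mean score gap in the statement. The variable $\langle \Delta_{ij}, X_0^\ell\rangle$ is a linear functional of a standard Gaussian, so it has law $\normal(0, \norm{\Delta_{ij}}^2)$. Consequently the gap is distributed as $\normal\left(m_{ij}(t), 4(1-\gamma_t)^2\norm{\Delta_{ij}}^2\right)$, which is the first claim. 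The Gaussian assumption enters only at this step, through the isotropy of $X_0^\ell$.

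For the probability, standardise. When $t<1$ and $\Delta_{ij} \neq 0$, we have
\begin{equation}
    \Prob\left(S_t(i) < S_t(j)\right) = \Prob\left(Z < -\frac{m_{ij}(t)}{2(1-\gamma_t)\norm{\Delta_{ij}}}\right) = 1 - \Phi\left(\frac{m_{ij}(t)}{2(1-\gamma_t)\norm{\Delta_{ij}}}\right),
\end{equation}
with $Z$ standard normal, using the symmetry $\Phi(-x) = 1 - \Phi(x)$.

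The main obstacle is the boundary case $t=1$, where $\gamma_1 = 1$ and the variance vanishes. Here I would read the formula with the conventions $\Phi(+\infty)=1$ and $\Phi(-\infty)=0$. The gap is then the constant $m_{ij}(1)$, and the formula is recovered as the limit $t \to 1$ whenever $m_{ij}(1) \neq 0$; the tie case $m_{ij}(1)=0$ has to be handled by convention. I would also assume the embeddings are distinct, so that $\Delta_{ij} \neq 0$. Finally, I would note that the result is pairwise and conditional on $X_1^\ell$. Obtaining the full DER would require averaging over $X_1^\ell$ and handling the dependence between the comparisons against all $j$, which, as stated before the proposition, is not treated.
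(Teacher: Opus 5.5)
Your proposal is correct and follows the paper's proof essentially step for step. You expand the squared norms, cancel $\norm{I_t^\ell}^2$, substitute the interpolant to isolate $2(1-\gamma_t)\langle \Delta_{ij}, X_0^\ell\rangle$ as the only random term, and standardise using $\Phi(-x) = 1-\Phi(x)$. Your explicit conditioning on $X_1^\ell$, which implicitly uses the independence of $X_0^\ell$ from $X_1^\ell$, and your handling of the degenerate cases $t=1$ and $\Delta_{ij}=0$ are sensible additions that the paper leaves implicit.
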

\end{propbox}
\begin{proof}
Let $\ell \in \intrange{1}{L}$. We have that
\begin{equation}
    I_t^\ell \sim \normal\left(\gamma_t \emb(X_1^\ell), (1-\gamma_t)^2\matid\right).
\end{equation}
For all $v \in V$, define the scores
\begin{equation}
    S_t(v) = \norm{\emb(v) - I_t^\ell}^2 = \norm{\emb(v)}^2 + \norm{I_t^\ell}^2 - 2\left\langle \emb(v), I_t^\ell \right\rangle.
\end{equation}
For the nearest neighbour decoding to yield the true final token, $X_1^\ell$, let us compare the pairwise errors, for indices $i \neq j$:
\begin{equation}
    S_t(i) - S_t(j) = \norm{\emb(i)}^2 - \norm{\emb(j)}^2 - 2\left\langle\emb(i), I_t^\ell\right\rangle + 2\left\langle\emb(j), I_t^\ell\right\rangle.
\end{equation}
Expanding the dot product with embedding $i$ (or identically for $j$), by linearity,
\begin{equation}
    \left\langle\emb(j), I_t^\ell\right\rangle = (1-\gamma_t)\left\langle\emb(j), X_0^\ell\right\rangle + \gamma_t\left\langle \emb(j), \emb(X_1^\ell) \right\rangle,
\end{equation}
we can then rewrite the difference of scores, for $\Delta_{ij} \coloneqq \emb(j) - \emb(i)$, as
\begin{equation}
    S_t(i) - S_t(j) = 2\gamma_t\left\langle \Delta_{ij}, \emb(X_1^\ell) \right\rangle + \norm{\emb(i)}^2 - \norm{\emb(j)}^2 + 2(1-\gamma_t)\left\langle \Delta_{ij}, X_0^\ell \right\rangle.
\end{equation}
Since $X_0^\ell \sim \normal(0, \matid)$, the last term follows $\normal\left(0, 4(1-\gamma_t)^2 \norm{\Delta_{ij}}^2\right)$. Hence,
\begin{equation}
    \big(S_t(i) - S_t(j)\big)\sim\normal\left(m_{ij}(t), 4(1-\gamma_t)^2 \norm{\Delta_{ij}}^2 \right),
\end{equation}
where
\begin{equation}
    \forall 0\leq t \leq 1,\qquad m_{ij}(t) \coloneqq 2\gamma_t\left\langle \Delta_{ij}, \emb(X_1^\ell) \right\rangle + \norm{\emb(i)}^2 - \norm{\emb(j)}^2.
\end{equation}
So the probability of the event of interest can be expressed as
\begin{equation}
    \Prob\left(S_t(i) < S_t(j)\right) = \Phi\left(\frac{-m_{ij}(t)}{2(1-\gamma_t)\norm{\Delta_{ij}}}\right) = 1 - \Phi\left(\frac{m_{ij}(t)}{2(1-\gamma_t)\norm{\Delta_{ij}}}\right).
\end{equation}
\end{proof}

\paragraph{Recovering the one-hot DER.}
\label{app:one-hot-limit}
Informally, consider unit-norm embeddings with pairwise inner products $\rho_{ij} \coloneqq \langle \emb(i), \emb(j)\rangle$, $i \neq j$. If cross-entropy spreads the embeddings roughly uniformly over the sphere, then, since two independent uniform points on the sphere have an inner product of mean $0$ and variance $1/d$, the $\rho_{ij}$ are of order $1/\sqrt{d}$, and thus vanish as $d$ grows. Taking $i = X_1^\ell$ to be the true token, the norm terms of~\Cref{prop:emb-dep-sched} cancel, $\norm{\Delta_{ij}}^2 = 2(1-\rho_{ij})$ and $m_{ij}(t) = -2\gamma_t(1-\rho_{ij})$, so that
\begin{equation}
    \Prob\left(S_t(i) < S_t(j)\right) = \Phi\left(\frac{\gamma_t}{1-\gamma_t}\sqrt{\frac{1-\rho_{ij}}{2}}\right) \xrightarrow[\rho_{ij} \to 0]{} \Phi\left(\frac{\gamma_t}{\sqrt{2}(1-\gamma_t)}\right),
\end{equation}
the one-hot value. In the same limit, the comparisons also recover the correlation structure of the one-hot case, up to $O(\rho_{ij})$ corrections, so the full DER tends to the one-hot expression of~\citet{candi,duo}. An embedding spread out by cross-entropy therefore approximately inherits the one-hot DER.

\subsection{Cross-entropy, mean squared error and the embedding geometry}
\label{app:ce-mse-geometry}
To gain some intuition, we compare, informally and approximately, how the cross-entropy and the MSE terms of Plaid~\citep{plaid} move the input embeddings $W \in \R^{\vocabsize \times d}$, with rows $\emb(v)$. Fix a position $\ell$ with true token $i = \vx^\ell$, write $\sigma_t = 1-\gamma_t$ and $I_t^\ell = \sigma_t X_0^\ell + \gamma_t \emb(i)$. \emph{We assume the denoiser is Bayes-optimal}, so that its prediction is the posterior
\begin{equation}
    \pi^{\theta,\ell,v}_t(I_t) \propto p(v) \exp\left(-\norm{I_t^\ell - \gamma_t\emb(v)}^2/2\sigma_t^2\right),
\end{equation}
for some prior $p$ over tokens. Denote by $\mu \coloneqq \pi^{\theta,\ell}_t W = \sum_v \pi^{\theta,\ell,v}_t \emb(v)$ the expected embedding, and observe that $\mu - \emb(i) = \sum_{v \neq i} \pi^{\theta,\ell,v}_t (\emb(v) - \emb(i))$ points \emph{towards} the tokens carrying the error mass.

The cross-entropy sees the embeddings only through the interpolant, through $\emb(i)$, and recall that $X_1^\ell$ is the one-hot random variable for the token at position $\ell$ (\ie $X_1^{\ell, \vx^\ell} = 1$ and $X_1^{\ell, v}= 0$ for $v \neq \vx^\ell$). Hence, omitting the arguments for the denoiser at time $t$, $\pi^\theta_t$, for conciseness,
\begin{align}
    \nabla_{\emb(i)}\left(-\log \pi^{\theta,\ell,i}_t\right) &= \gamma_t \nabla_{I_t^\ell}\left(-\log \pi^{\theta,\ell,i}_t\right)\\
    &= \gamma_t \nabla_{I_t^\ell}\Biggl(\frac{\norm{I_t^\ell - \gamma_t\emb(i)}^2}{2\sigma_t^2} \notag\\
    &\qquad + \log \sum_{v} p(v) \exp\left(-\frac{\norm{I_t^\ell - \gamma_t\emb(v)}^2}{2\sigma_t^2}\right)\Biggr) \\
    &= \frac{\gamma_t}{\sigma_t^2}\left(\left(I_t^\ell - \gamma_t\emb(i)\right) - \sum_{v} \pi^{\theta,\ell,v}_t \cdot\left(I_t^\ell - \gamma_t\emb(v)\right)\right) \label{eq:ce-grad-posterior}\\
    &= \frac{\gamma_t}{\sigma_t^2}\left(\sum_{v} X_1^{\ell,v}\left(I_t^\ell - \gamma_t\emb(v)\right) - \sum_{v} \pi^{\theta,\ell,v}_t \cdot \left(I_t^\ell - \gamma_t\emb(v)\right)\right) \\
    &= \frac{\gamma_t}{\sigma_t^2}\sum_{v}\left(X_1^{\ell,v} - \pi^{\theta,\ell,v}_t\right)\left(I_t^\ell - \gamma_t\emb(v)\right) \\
    &= \frac{\gamma_t}{\sigma_t^2}\Biggl(I_t^\ell \underbrace{\sum_{v}\left(X_1^{\ell,v} - \pi^{\theta,\ell,v}_t\right)}_{=0} - \gamma_t \sum_{v}\left(X_1^{\ell,v} - \pi^{\theta,\ell,v}_t\right)\emb(v)\Biggr)\\
    \nabla_{\emb(i)}\left(-\log \pi^{\theta,\ell,i}_t\right) &= \frac{\gamma_t^2}{\sigma_t^2}\left(\mu - \emb(i)\right).
\end{align}
For~\eqref{eq:ce-grad-posterior}, we used the Bayes optimality of the denoiser. Writing $a_v \coloneqq \norm{I_t^\ell - \gamma_t\emb(v)}^2/2\sigma_t^2$, so that $\nabla_{I_t^\ell} a_v = (I_t^\ell - \gamma_t\emb(v))/\sigma_t^2$,
\begin{equation}
    \nabla_{I_t^\ell} \log \sum_{v} p(v) e^{-a_v} = -\sum_{v} \frac{p(v) e^{-a_v}}{\sum_{u} p(u) e^{-a_u}} \nabla_{I_t^\ell} a_v = -\frac{1}{\sigma_t^2}\sum_{v} \pi^{\theta,\ell,v}_t\cdot\left(I_t^\ell - \gamma_t\emb(v)\right),
\end{equation}
with the last equality coming from the definition of the Bayes optimal denoiser. The wrong embeddings, $\emb(v)$ for $v \neq i$, receive no gradient from the cross-entropy through the input, since they do not enter $I_t^\ell$. On the other hand, for the MSE term, $\tfrac{1}{2} w_t \norm{(\pi^{\theta,\ell}_t - X_1^\ell)W}^2$, with weight $w_t \geq 0$ for all $t$,
\begin{equation}
    \nabla_{\emb(v)}\, \tfrac{1}{2} w_t \norm{(\pi^{\theta,\ell}_t - X_1^\ell)W}^2 = w_t\left(\pi^{\theta,\ell,v}_t - X_1^{\ell,v}\right)\left(\mu - \emb(i)\right) + w_t\nabla_{\emb(v)}\pi^{\theta,\ell}_t W\left(\mu - \emb(i)\right).
\end{equation}
The cross-entropy gradient and the first term of the MSE gradient act on $\emb(i)$ along the same direction, $\mu - \emb(i)$, but with opposite signs. A descent step on the cross-entropy moves the true token \emph{away} from the predicted mean---\ie away from the tokens it is confused with---weighted by how much it is confused with them, and without preference for any other structure, thus spreading the embeddings apart. A descent step on the first term of the MSE instead moves $\emb(i)$ \emph{towards} $\mu$ with weight $1 - \pi^{\theta,\ell,i}_t$, and translates every confused token $\emb(v)$ with weight $\pi^{\theta,\ell,v}_t$ along the common direction $\emb(i) - \mu$, from the predicted mean to the true token, which brings $\emb(v)$ towards $\emb(i)$ when the confusion concentrates on $v$; on its own, the MSE is trivially minimised by the collapse $W = 0$. The second term has no definite direction.

Again, \emph{the above is only informal}: a trained network merely approximates the posterior.

\subsection{Plaid self-conditioning incorporation}
\label{app:plaid-self-cond-incorp}
With a self-conditioning input signal $\SC$, alongside the usual input $x_t$, the Plaid architecture forwards the two inputs to the rest of the diffusion transformer as
\begin{equation}
    h_0^\ell = \begin{pmatrix}
        \SC & x_t
    \end{pmatrix} \begin{bmatrix}
        W_c \\ W_x
    \end{bmatrix} .
\end{equation}
It thus doubles the size of the effective input matrix (\ie from $d\times h$ to $2d\times h$).

\subsection{Embedded self-conditioning: beyond Euler steps}
\label{app:self-cond-beyond-euler}
\begin{figure}[tbp]
    \centering
    \includegraphics[width=\linewidth]{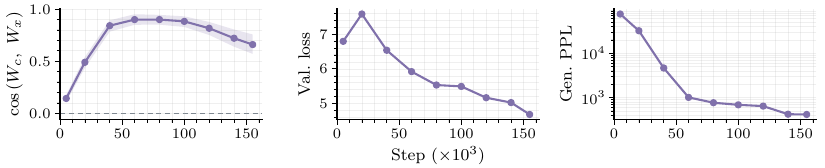}
    \caption{Statistics across training steps, for a Plaid model trained on OWT. \textbf{(Left)} The evolution of cosine similarity between $W_c$ (the part of the input matrix for self-conditioning) and $W_x$ (the part that multiplies the input). \textbf{(Middle)} The validation loss. \textbf{(Right)} The \genppl.}
    \label{fig:alignment-self-cond}
\end{figure}
Another observation which is consistent with our understanding from~\Cref{ssec:self-cond-embedded} is in~\Cref{fig:alignment-self-cond}. Observe that one can split the input projection matrix in two parts: the one that multiplies the usual input, $W_x$, and the one that multiplies the self-conditioning, $W_c$ (see~\Cref{app:plaid-self-cond-incorp} for additional detail).
We can then measure the cosine similarity between their vectorised forms. For the first $80$k steps, the model converges to representations that are highly aligned (with a cosine similarity of $0.9$), and the input projection thus amounts approximately to a linear combination followed by a common projection thereon. After that, the loss, performance and the embedding alignment of the model plateau for roughly $30$k iterations, and then continue improving anew when the cosine similarity drops down to $0.6$, alluding to the fact that the model learns a more complex function than a mere Euler-like step.

\subsection{Self-conditioning and token identifiability}
\label{app:self-cond-identifiability}
We use, here, the path $I_t^\ell = \alpha_t X_0^\ell + \beta_t \emb(X_1^\ell)$ (as in~\Cref{app:vdm-snr}), with $X_0\sim\normal(\vzero, \matid)$ independent of $X_1$, and with $\SCvarl$ being $I_t$-measurable (which holds at train time). Recall the distortion definition:
\begin{equation}
    \distortion_B(R) \coloneqq \sup_{\mathrm{Lip}(g) \leq B} \E\norm{g(R) - g(\emb(X_1^\ell))}^2,
\end{equation}
where $R$ is any square integrable representation of $I_t$. For any $B$-Lipschitz function $g:\R^d\to\R^d$, we have that
\begin{equation}
    \E\norm{g(R) - g(\emb(X_1^\ell))}^2 \leq B^2 \E\norm{R - \emb(X_1^\ell)}^2.
\end{equation}
The linear function $g:x\mapsto Bx$ attains the RHS bound, and thus $\distortion_B(R) = B^2\E\norm{R-\emb(X_1^\ell)}^2$. For the minimisation by $D_t^{\star, \ell}$, observe that
\begin{equation}
    \E\left\langle R - D_t^{\star, \ell}, D_t^{\star,\ell} - \emb(X_1^\ell)\right\rangle = \E\left\langle R - D_t^{\star, \ell},  \E\left[ D_t^{\star,\ell} - \emb(X_1^\ell) \mid I_t \right] \right\rangle = 0,
\end{equation}
since the RHS of the inner product is zero, by definition. The first equality is true by the tower property of the expectation, and $R$ and $D_t^{\star, \ell}$ are both $I_t$-measurable. Therefore, we can add and subtract a $D_t^{\star, \ell}$ term to show that
\begin{equation}
    \label{eq:dist-simplification}
    \distortion_B(R) = B^2\E\norm{ D_t^{\star,\ell} - \emb(X_1^\ell) }^2 + B^2\E\norm{R - D_t^{\star,\ell}}^2,
\end{equation}
which implies minimisation when $R = D_t^{\star, \ell}$ almost surely. For the comparison, define
\begin{equation}
    \epsilon_t^\ell \coloneqq \E\norm{\SCvarl - D_t^{\star, \ell}}^2,
\end{equation}
and
\begin{equation}
    \mmse_t^{\ell} \coloneqq \E\norm{\emb(X_1^\ell) - D_t^{\star, \ell}}^2.
\end{equation}
Therefore,
\begin{equation}
    \distortion_B(I_t^\ell) = B^2\E\norm{\alpha_t X_0^\ell - (1-\beta_t) \emb(X_1^\ell) }^2 = B^2\left[d \alpha_t^2 + (1-\beta_t)^2\E\norm{\emb(X_1^\ell)}^2\right].
\end{equation}
On the other hand, using~\eqref{eq:dist-simplification},
\begin{equation}
    \distortion_B(\SCvarl) = B^2\E\norm{\emb(X_1^\ell) - D_t^{\star, \ell}}^2 + B^2\E\norm{\SCvarl - D_t^{\star, \ell}}^2 = B^2\mmse_t^\ell + B^2\epsilon_t^\ell.
\end{equation}
It follows that
\begin{equation}
    \frac{\distortion_B(I_t^\ell) - \distortion_B(\SCvarl)}{B^2} = d\alpha_t^2 + (1-\beta_t)^2 \E\norm{\emb(X_1^\ell)}^2 - \mmse_t^\ell - \epsilon_t^\ell.
\end{equation}
For the normalisation, $I_t^\ell/\beta_t$, remark that
\begin{equation}
    I_t^\ell/\beta_t - \emb(X_1^\ell) = \frac{\alpha_t}{\beta_t} X_0^\ell + \emb(X_1^\ell) - \emb(X_1^\ell) = \SNR(t)^{-1/2}X_0^\ell.
\end{equation}
Whence:
\begin{equation}
    \distortion_B(I_t^\ell/\beta_t) - \distortion_B(\SCvarl) = B^2\left[ \frac{d}{\SNR(t)} - \mmse_t^\ell - \epsilon_t^\ell \right].
\end{equation}
Note that $\mmse_t^\ell \leq \E\norm{\emb(X_1^\ell)}^2 = 1$ for unit-normalised embeddings.

\paragraph{Dependence on dimension.} We now show that, for unit-normalised embeddings, the MMSE does not depend on $d$ explicitly, but only on $\SNR(t)$, the data distribution and the geometry of the codebook through its Gram matrix $G \in \R^{\vocabsize\times\vocabsize}$, $G_{uv} \coloneqq \langle \emb(u), \emb(v)\rangle$ for $u, v\in V$, \ie $G = WW^\top$.

Let $x_1 = (x_1^1, \ldots, x_1^L) \in V^L$. Since the $X_0^m$ are independent standard Gaussians, the likelihood of $I_t$ given $X_1 = x_1$ is
\begin{equation}
    p(I_t \mid X_1 = x_1) \propto \prod_{m=1}^L \exp\left(-\frac{\norm{I_t^m - \beta_t\emb(x_1^m)}^2}{2\alpha_t^2}\right) \propto \prod_{m=1}^L \exp\left(S^m_{x_1^m}\right), \qquad S^m_v \coloneqq \frac{\beta_t}{\alpha_t^2}\left\langle I_t^m, \emb(v)\right\rangle,
\end{equation}
where the proportionality constants do not depend on $x_1$, as $\norm{\emb(v)}^2 = 1$ for all $v\in V$. Therefore, by Bayes' rule, the posterior $q^\ell_v \coloneqq \Prob(X_1^\ell = v \mid I_t)$ is a function of $S = (S^m_v)_{m, v}$ and of the law of $X_1$ only, and $D_t^{\star,\ell} = \sum_{v\in V}q^\ell_v\emb(v)$. Conditionally on $X_1 = x_1$,
\begin{equation}
    S^m_v = \SNR(t)\, G_{x_1^m v} + \SNR(t)^{1/2}\left\langle X_0^m, \emb(v)\right\rangle,
\end{equation}
where, for each $m$, $(\langle X_0^m, \emb(v)\rangle)_{v\in V}$ is a centred Gaussian vector with covariance $G$, independent across $m$. Hence, the joint law of $(X_1, S)$, and thus that of $(X_1, q^\ell)$, depends only on the law of $X_1$, $\SNR(t)$ and $G$. Finally, writing $\delta_{v}\in\R^\vocabsize$ for the indicator vector of $v$,
\begin{equation}
    \mmse_t^\ell = \E\norm{\sum_{v\in V}\left(\delta_{X_1^\ell} - q^\ell\right)_v \emb(v)}^2 = \E\left[\left(\delta_{X_1^\ell} - q^\ell\right)^\top G \left(\delta_{X_1^\ell} - q^\ell\right)\right],
\end{equation}
which depends on the same quantities only.

Consequently, two codebooks of dimension $d$ and $d'$ sharing a Gram matrix attain the same $\mmse_t^\ell$ at the same $\SNR(t)$, whereas the distortion of the normalised input, $B^2 d/\SNR(t)$, scales linearly with the dimension. That is to say, at a fixed level of difficulty of token identification, the advantage of the posterior mean,
\begin{equation}
    \distortion_B(I_t^\ell/\beta_t) - \distortion_B(\SCvarl) = B^2\left[\frac{d}{\SNR(t)} - \mmse_t^\ell - \epsilon_t^\ell\right],
\end{equation}
grows linearly in $d$ for a fixed approximation error $\epsilon_t^\ell$: the noise lying in the extra directions is a nuisance which $D_t^{\star, \ell}$ averages out, but which $I_t^\ell$ carries into the network.

Observe that the set of Gram matrices of unit-normalised codebooks realisable in dimension $d$ grows with $d$ (any codebook in $\R^d$ embeds into $\R^{d'}$ for $d' > d$ by zero-padding, preserving $G$), so that the lowest $\mmse_t^\ell$ attainable by such a codebook is non-increasing in $d$, whereas the distortion of the normalised input increases linearly in $d$.

\section{Further schedules}
\subsection{Entropic schedule}
\begin{figure}[tbp]
    \centering
    \includegraphics[width=\linewidth]{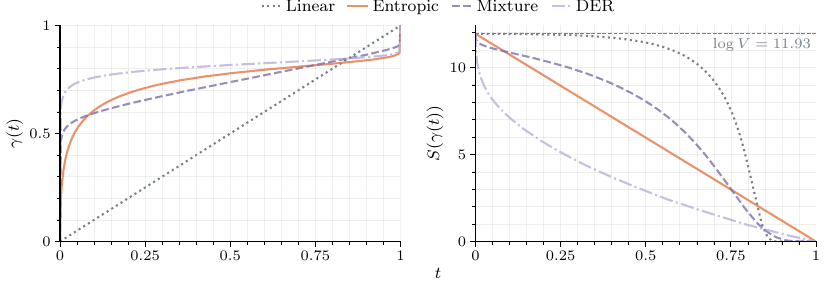}
    \caption{The entropic schedule against the base DER, mixture and linear schedules. \textbf{(Left)} The schedules themselves. \textbf{(Right)} The expected remaining entropy, $S(\gamma_t) = \E_{X_0^\ell}[H(X_1^\ell \mid I_t^\ell)]$.}
    \label{fig:entropic-schedule}
\end{figure}
\begin{figure}[tbp]
    \centering
    \includegraphics[width=\textwidth]{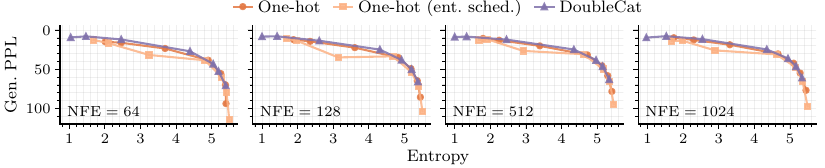}
    \caption{\genppl as measured by GPT-2 for models trained on OpenWebText with sequence lengths of $1024$ tokens, using the GPT-2 tokeniser.}
    \label{fig:entropic-sched-pareto}
\end{figure}
We attempt, here, to understand why the mixture schedule performs better than the raw DER schedule~\citep{fmlm,dfm,scalingcfm}. For the usual one-hot flow case with a uniform prior, the true token-wise Bayesian posterior reads
\begin{equation}
    \forall 0 \leq t \leq 1, \qquad D_t^\ell = \softmax\left(\frac{\gamma_t I_t^\ell}{(1-\gamma_t)^2}\right).
\end{equation}
Let $S:[0,1]\to\R_+$ be the expected remaining entropy; that is to say,
\begin{equation}
    \forall 0 \leq t \leq 1, \qquad S(\gamma_t) \coloneqq \E_{X_0^\ell}\left[H(X_1^\ell \mid I_t^\ell)\right].
\end{equation}
Instead of linearising directly the schedule in the probability of $\argmax~I_t^\ell \neq \argmax~X_1^\ell$ as the DER schedule does, we propose to choose a schedule to advance linearly in $S$. Concretely, we are looking for a solution, $\gamma:[0,1]\to[0,1]$, to the following:
\begin{equation}
    \forall 0 \leq t \leq 1, \qquad S(\gamma_t) = (1-t)\log \vocabsize.
\end{equation}
The solution has no closed form, so we instead build it with a look up table, which incurs a constant cost once before training, exactly as the DER schedule. Let us call the solution the ``entropic'' schedule. We plot the DER, mixture and entropic schedules in~\Cref{fig:entropic-schedule}, alongside the resulting $S$, for each choice of $\gamma$. We remark that the mixture schedule in fact closely follows the entropic schedule both in $\gamma$ and $S \circ\gamma$. Furthermore, we also plot the \genppl/Entropy Pareto fronts of a one-hot flow with the mixture schedule, the entropic schedule and \doublecat in~\Cref{fig:entropic-sched-pareto}. While it closely matches the curve of the mixture schedule, it exhibits an outlier near $3$ nats in entropy, especially at low NFE; however, we report that the resulting performance on TinyGSM is the same as that of the mixture schedule.

We believe that this entropic schedule and the mixture one allude to the fact that making uniform progress in the DER \emph{probability} $P_e$ might be the wrong criterion.

\subsection{Asynchronous schedules}
\label{app:asynchronous-schedules}
While appealing, discrete reveals incur two properties that render the usage of usual continuous diffusion tools difficult: the path is now instead \emph{piecewise} continuous, with discontinuity points that are \emph{stochastic}. For example, the formalism from Flow Map Matching~\citep{flowMapMatching,howToBuildConsistencyModel} then does not port trivially, because of the deterministic and continuous requirements on the flow map.

To return to a continuous and (conditionally on the endpoints) deterministic process, we note that the revealing procedure can be seen as a limiting case of \emph{asynchronous schedules} with added randomness. Instead of having a single time for the evolution of the interpolant, we can have a distinct time for each position, $t^\ell$. %
In principle, a schedule that depends on the current state or the model's denoising distribution is appealing, as parts of the sequence may benefit from being committed earlier than others. Nonetheless, training such a schedule either requires two forward passes from the denoising model, $\pi^\theta$, or training uniformly on the $[0,1]^L$ hypercube to produce a model that is agnostic to the specific choice of schedule; but the latter exposes the model to sequences that have uninformative statistics for inference.

We thus decide to solve this tension by resolving different tokens at different speeds, following \citet{wu2023ardiffusion}. Specifically, local times $t^\ell$ are derived from a global time $t^{g}$, \ie $t^\ell = F(t^g, \ell)$, which is monotonic in $\ell$ and approximately mean-preserving across positions, so that averaging $t^\ell$ over $\ell$ recovers $t^g$. The results are also available in~\Cref{tab:stagger-decoding}, with inner schedule $\kappa_t = t$. Asynchronous times improve accuracy, valid programs and name errors in three of the four matched settings, the exception being the linear schedule with InfoNoise.
InfoNoise raises accuracy almost uniformly, but for the mixture schedule it lowers the fraction of valid programs, so that the best accuracy and the fewest name errors are reached by different configurations. All settings remain well below \semicat (\Cref{tab:tinygsm-doublecat}); we hypothesise that crafting a better time schedule is crucial here.

\paragraph{Implementation.} We follow~\citet{wu2023ardiffusion} and the specific parametrisation by~\citet{bansal2026tokentimecontinuousdiffusion}. We introduce a global time $t^g$ that we sample as before: according to some density $p(t)$ that is often to be taken uniform, but in the case of InfoNoise can be adapted dynamically during training. During inference we step uniformly in $t^g$. For each position $\ell = 1,\ldots,L$ we introduce a local time $t^\ell = F(t^g, \ell)$ that is a function of the global time and the position. Each position $\ell$ gets a fixed rank $r_\ell \in (0,1)$, given by $r_\ell = 1 - (\ell - 0.5)/L$. The rank is then pushed through the quantile function of a Beta distribution whose mean is $t^g$: $t^\ell = Q_t(r_\ell)$, where $Q_t$ is the quantile function of $\mathrm{Beta}(k/(1-t), k/t)$ where $k$ specifies its concentration: as $k\to \infty$ we recover a shared time $t^\ell=t^g$. As $k\to 0$, $\mathrm{Beta}$ tends to a Bernoulli with mean $t^g$, so position $\ell$ switches from noise to data exactly when $t^g$ crosses $(\ell-0.5)/L$: a deterministic left-to-right reveal. In our experiments we use $k=1$; the resulting local times are shown in~\Cref{fig:stagger_field}.

\begin{figure}[t]
    \centering
    \includegraphics[width=\linewidth]{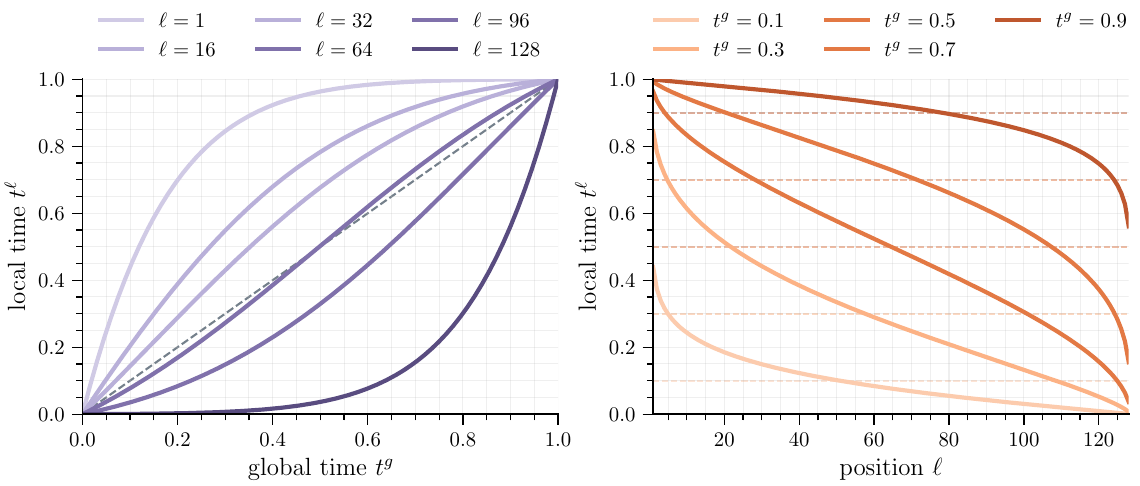}
    \caption{Local per-position times $t^{\ell}$ as a function of global time $t^g$ at concentration $k=1$ and sequence length
$L=128$.
\textbf{(Left)} Each curve traces one position as global time advances.
\textbf{(Right)} Each curve is the spatial profile at one global time, monotone
decreasing in $\ell$: the sequence is revealed left to right. The dashed
horizontals mark the per-position mean, which equals $t^g$ at every setting.
}
    \label{fig:stagger_field}
\end{figure}

\subsection{Adaptive allocation of training density}
Like \citet{hypersphere, cobit} we adapt InfoNoise, the information-guided time allocation of \citet{raya2026noiseschedulinginformationguidedallocation}, to the cross-entropy objective used in our model. The adaptation changes only the distribution $\pi(t)$ from which training times are sampled; the loss weighting itself remains fixed.
Writing $\bar\loss(t)$ for the expected loss at time $t$, the objective thus changes from $\int_0^1 p(t)\bar\loss(t)\,\mathrm dt$ to $\int_0^1 \pi(t)\bar\loss(t)\,\mathrm dt$, where $p$ is the base (uniform) density.

For a fixed reference batch $\mathcal{B}_{\mathrm{ref}}$ of size $|\mathcal{B}_{\mathrm{ref}}| = 256$ (a single batch), we periodically estimate the cross-entropy across a grid of corruption times $\{t_i\}_{i=1}^{M}$,
\begin{equation}
    \hat H(t_i)
    =
    \frac{1}{|\mathcal{B}_{\mathrm{ref}}|}
    \sum_{x\in\mathcal{B}_{\mathrm{ref}}}
    \loss_{\mathrm{CE}}(x,t_i).
\end{equation}
We measure at $M=32$ levels: $24$ at equal mass of the current density, plus 8 spaced uniformly over the interior of $[0,1]$.
Cross-entropy upper-bounds the conditional entropy of the clean token given its corrupted input, with equality for the optimal denoiser, so we use $\hat H(t)$ directly as a proxy for the conditional-entropy profile. This differs from applications of InfoNoise to continuous data, where the information profile must first be estimated indirectly from a denoising-error statistic.

The measured values are projected onto a monotone profile and interpolated with a cubic spline, yielding a smooth estimate $\widetilde H(t)$. We then construct an information-rate profile from its derivative,
\begin{equation}
    r(t)
    =
    \left|
    \frac{\mathrm d \widetilde H(t)}
         {\mathrm dt}
    \right|,
\end{equation}
and allocate training times in proportion to this rate. Thus, corruption regions over which the conditional entropy changes rapidly receive more training samples, while regions in which little information is gained or lost receive proportionally fewer.

To avoid starving regions assigned very low probability, we constrain the resulting density before normalisation. Denoting the unconstrained allocation by $\widetilde\pi(t) \propto r(t)$, we use
\begin{equation}
    \pi^{\mathrm{new}}(t)
    =
    \operatorname{normalise}
    \left[
        \operatorname{clip}
        \left(
            \widetilde\pi(t),
            \pi_{\min},
            \pi_{\max}
        \right)
    \right],
\end{equation}
with $\pi_{\min}=0.1$ and $\pi_{\max}=10$. The lower threshold ensures that every corruption level continues to receive training signal, which is particularly important for an online allocation because a region that is never revisited cannot subsequently develop an estimated information rate that would cause it to regain probability mass. The upper threshold prevents the allocation from becoming excessively concentrated.

We represent the sampling distribution on a fixed grid of 256 points. After each update, the density is numerically integrated on this grid using the trapezoidal rule to obtain a cumulative distribution function $F^{\mathrm{new}}$. Rather than replacing the current allocation immediately, we damp successive updates in CDF space,
\begin{equation}
    F^{(n)}
    =
    (1-\lambda)F^{(n-1)}
    +
    \lambda F^{\mathrm{new}},
    \qquad
    \lambda = 0.5.
\end{equation}
Averaging normalised cumulative distributions rather than raw loss or entropy values makes the update insensitive to changes in the overall scale of the loss during training and smooths changes in where probability mass is allocated. Using a fixed grid likewise provides a common coordinate system across successive measurements.

We recompute the entropy profile and update the allocation every $n_{\mathrm{dynamic}} = 2500$ training iterations. Between updates, corruption times are sampled from the current tabulated CDF by inverse-transform sampling,
\begin{equation}
    u\sim\mathcal U(0,1),
    \qquad
    t = \big(F^{(n)}\big)^{-1}(u),
\end{equation}
where the inverse is obtained by linear interpolation between neighbouring points. The resulting samples determine only which corruption times are presented during subsequent training iterations; all other components of the objective are unchanged.

\begin{table}[t]
  \centering
  \small
  \setlength{\tabcolsep}{4pt}
  \begin{tabular*}{\textwidth}{@{\extracolsep{\fill}}lllcccccc@{}}
    \toprule
    & & & \multicolumn{4}{c}{\textbf{Accuracy} (\%, $\uparrow$)} & \multicolumn{2}{c}{\textbf{Best} (\%)} \\
    \cmidrule(lr){4-7}\cmidrule(lr){8-9}
    InfoNoise & Schedule & Asynchronous & $T = 1$ & $T = 0.5$ & $T = 0.2$ & $\mathrm{top}\text{-}k\text{=}1$ & Runs ($\uparrow$) & Name err. ($\downarrow$) \\
    \midrule
    \multirow{4}{*}{Off} & \multirow{2}{*}{Linear} & Off & $6.82$ & $9.40$ & $11.14$ & $9.40$ & $63.61$ & $36.01$ \\
     &  & On & $\mathbf{7.88}$ & $\mathbf{10.99}$ & $\mathbf{12.28}$ & $\mathbf{10.99}$ & $\mathbf{66.26}$ & $\mathbf{33.43}$ \\
    \cmidrule(l){2-9}
     & \multirow{2}{*}{Mixture} & Off & $10.39$ & $14.78$ & $13.65$ & $13.87$ & $68.16$ & $31.54$ \\
     &  & On & $\mathbf{12.21}$ & $\mathbf{15.47}$ & $\mathbf{16.00}$ & $\mathbf{14.63}$ & $\mathbf{73.39}$ & $\mathbf{26.38}$ \\
    \midrule
    \multirow{4}{*}{On} & \multirow{2}{*}{Linear} & Off & $9.63$ & $17.44$ & $\mathbf{18.42}$ & $15.69$ & $\mathbf{67.78}$ & $\mathbf{31.69}$ \\
     &  & On & $\mathbf{11.37}$ & $\mathbf{18.20}$ & $17.82$ & $\mathbf{17.44}$ & $66.11$ & $33.43$ \\
    \cmidrule(l){2-9}
     & \multirow{2}{*}{Mixture} & Off & $10.84$ & $16.45$ & $14.48$ & $13.80$ & $62.85$ & $36.85$ \\
     &  & On & $\mathbf{15.54}$ & $\mathbf{19.11}$ & $\mathbf{19.41}$ & $\mathbf{18.04}$ & $\mathbf{68.23}$ & $\mathbf{31.31}$ \\
    \bottomrule
  \end{tabular*}
  \caption{
  TinyGSM accuracy at NFE $=256$. For linear and mixture outer schedules $\gamma_t$ and inner schedule $\kappa_t=t$, we compare a scalar time shared across all positions against an asynchronous left-to-right time per token position, each with and without dynamic allocation via InfoNoise. All settings are \doublecat{} sampled as a one-hot flow.
  Better of each matched pair in each column is shown in \textbf{bold}.
  }
  \label{tab:stagger-decoding}
\end{table}